\documentclass{article}

\usepackage{microtype}
\usepackage{graphicx}
\usepackage{subcaption}
\usepackage{booktabs} 

\usepackage{hyperref}

\usepackage[preprint]{icml2026}

\usepackage{amsmath}
\usepackage{amssymb}
\usepackage{mathtools}
\usepackage{amsthm}
\usepackage{multirow} 

\usepackage{amsfonts}
\usepackage{algorithm, algorithmic}
\usepackage{bm}
\usepackage{tabularx}

\newcommand{\bW}{\mathbf{W}}
\newcommand{\bX}{\mathbf{X}}

\newcommand{\calL}{\mathcal{L}}

\newcommand{\calE}{\mathcal{E}}
\newcommand{\prox}{\text{prox}}

\usepackage[capitalize,noabbrev]{cleveref}

\theoremstyle{plain}
\newtheorem{theorem}{Theorem}[section]

\theoremstyle{definition}
\newtheorem{definition}[theorem]{Definition}
\newtheorem{assumption}[theorem]{Assumption}
\theoremstyle{remark}
\newtheorem{remark}[theorem]{Remark}

\usepackage[textsize=tiny]{todonotes}

\icmltitlerunning{Astro: Activation-guided Structured Regularization for Outlier-Robust LLM Post-Training Quantization}

\begin{document}

\twocolumn[
  \icmltitle{Astro: Activation-guided Structured Regularization for \\ Outlier-Robust LLM Post-Training Quantization}



  \icmlsetsymbol{equal}{*}

  \begin{icmlauthorlist}
    \icmlauthor{Xi Chen}{bit}
    \icmlauthor{Ming Li}{bit}
    \icmlauthor{Junxi Li}{bit}
    \icmlauthor{Changsheng Li}{bit}
    \icmlauthor{Peisong Wang}{comp}
    \icmlauthor{Lizhong Ding}{bit}
    \icmlauthor{Ye Yuan}{bit}
    \icmlauthor{Guoren Wang}{bit,hebei}
  \end{icmlauthorlist}

  \icmlaffiliation{bit}{Beijing Institute of Technology}
  \icmlaffiliation{comp}{Institute of Automation, Chinese Academy of Sciences}
  \icmlaffiliation{hebei}{Hebei Province Key Laboratory of Big Data Science and Intelligent Technology}

  \icmlcorrespondingauthor{Changsheng Li}{lcs@bit.edu.cn}


  \vskip 0.3in
]



\printAffiliationsAndNotice{}  

\begin{abstract}
Weight-only post-training quantization (PTQ) is crucial for efficient Large Language Model (LLM) deployment but suffers from accuracy degradation caused by weight and activation outliers. Existing mitigation strategies often face critical limitations: they either yield insufficient outlier suppression or incur significant deployment inefficiencies, such as inference latency, heavy preprocessing, or reliance on complex operator fusion. To resolve these limitations, we leverage a key insight: over-parameterized LLMs often converge to \textit{Flat Minima}, implying a vast equivalent solution space where weights can be adjusted without compromising accuracy. Building on this, we propose \textbf{Astro}, an \underline{\textbf{A}}ctivation-guided \underline{\textbf{St}}ructured \underline{\textbf{R}}egularizati\underline{\textbf{o}}n framework designed to suppress the negative effects of outliers in a hardware-friendly and efficient manner. Leveraging the activation-guided regularization objective, Astro actively reconstructs intrinsically robust weights, aggressively suppressing weight outliers corresponding to high-magnitude activations without sacrificing model accuracy. Crucially, Astro introduces zero inference latency and is orthogonal to mainstream quantization methods like GPTQ. Extensive experiments show that Astro achieves highly competitive performance; notably, on LLaMA-2-7B, it achieves better performance than complex learning-based rotation methods with almost $1/3$ of the quantization time.
\end{abstract}

\section{Introduction}

Large Language Models (LLMs) have achieved remarkable success across diverse natural language processing (NLP) tasks, driven largely by the exponential growth in parameter scale. However, this massive scale introduces significant memory footprints and computational overheads, severely impeding practical deployment. 

To address these resource constraints, weight-only post-training quantization (PTQ) has emerged as a prominent compression strategy, precisely addressing the memory-bound bottleneck of LLM decoding \citep{lin2024awq, frantar2023optq, huang2024billm, kim2024squeezellm}. In the autoregressive generation phase, the primary latency bottleneck stems from the limited memory bandwidth required to stream weights from high bandwidth memory (HBM) to on-chip computing units (SRAM). By compressing weights into low-bit integer formats, weight-only PTQ significantly reduces memory footprints and mitigates data transfer overhead, thereby achieving substantial acceleration in token generation.

However, as illustrated in Figure \ref{fig:outliers}, existing weight-only PTQ methods face a core challenge: the significant outliers in both LLM weights and activations \cite{nipsllmint8, xiao2023smoothquant}. This issue manifests in two mechanisms. First, weight outliers drastically stretch the quantization range, resulting in a coarser quantization granularity that degrades the representation precision for the vast majority of normal values. Second, although weight-only PTQ does not directly quantize activations, activation outliers act as amplifiers during matrix multiplication. Consequently, these outliers severely amplify the quantization error of corresponding weights, ultimately leading to catastrophic collapse in model performance.

Extensive research has been dedicated to mitigating the performance degradation induced by outliers. One research trajectory focuses on differential treatment of outliers \cite{nipsllmint8, kim2024squeezellm, lee2024owq, zhao2024atom}. This includes techniques such as separating outliers from the main weight matrix to store them in high-precision formats, or employing non-uniform quantization algorithms to better fit the heavy-tailed distribution of weights. While effective, these methods often introduce irregular memory access patterns and additional computational branches, which disrupt hardware parallelism and increase inference latency.

Another research trajectory focuses on equivalent mathematical transformations, such as channel-wise scaling or rotation, aimed at smoothing weight or activation distributions to reduce quantization difficulty \cite{lin2024awq, shao2024omniquant, liu2025spinquant, ashkboos2024quarot}. 
However, existing solutions often suffer from limitations such as insufficient outlier suppression (leading to suboptimal accuracy), high preprocessing overhead, or architectural modifications (leading to additional inference latency or requiring complex operator fusion), complicating these methods' actual deployment. Detailed analyses are presented in Appendix \ref{app:inference_overhead}. This context motivates a pivotal question:

\textbf{Can we design a lightweight, hardware-friendly mechanism that sufficiently suppresses the negative impact of outliers?}

We answer this affirmatively by leveraging a critical insight often overlooked in PTQ: as shown in Figure \ref{fig:astro_overview}, over-parameterized LLMs often converge to \textit{Flat Minima} in the loss landscape, not sharp isolated optima \cite{garipov2018loss, draxler2018essentially, maly2023simple}. As detailed in our theoretical \ref{thm:flat_minima}, this implies the existence of a vast equivalent solution space near the original pre-trained weights. Within this solution space, we can search for new weights that yield nearly identical loss values to the original but are numerically robust to quantization.
However, existing methods often fail to exploit these degrees of freedom.

\begin{figure}[t]
    \centering
    \includegraphics[width=1.0\linewidth]{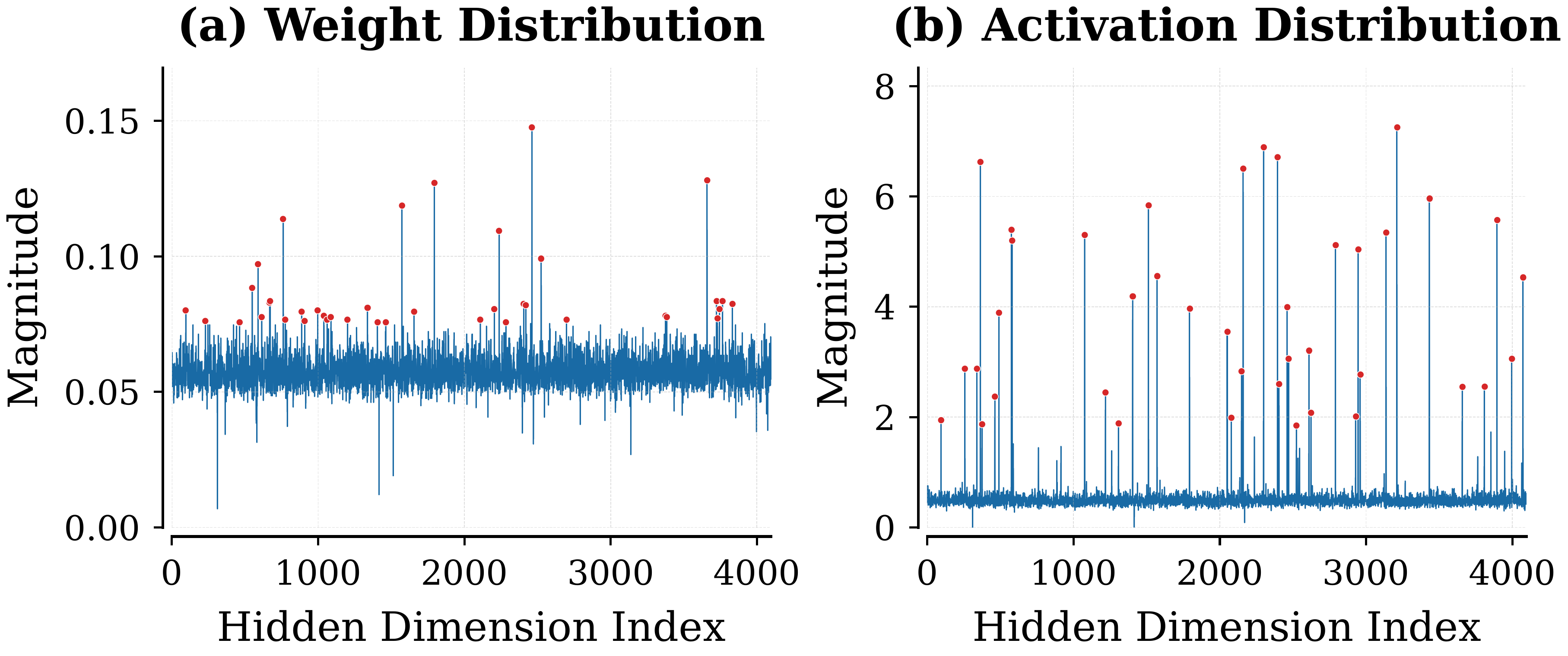} 
    \vspace{-15pt}
    \caption{Magnitude distributions of weight and activations at one layer in LLaMA-2-7B. Both LLM weights and activations exhibit significant outliers (highlighted in red), which cause severe performance degradation in low-bit quantization settings.}
    \label{fig:outliers}
    \vspace{-10pt}
\end{figure}

\begin{figure*}[htbp]
  \centering
  \vspace{-5pt}
  \includegraphics[width=0.7\linewidth]{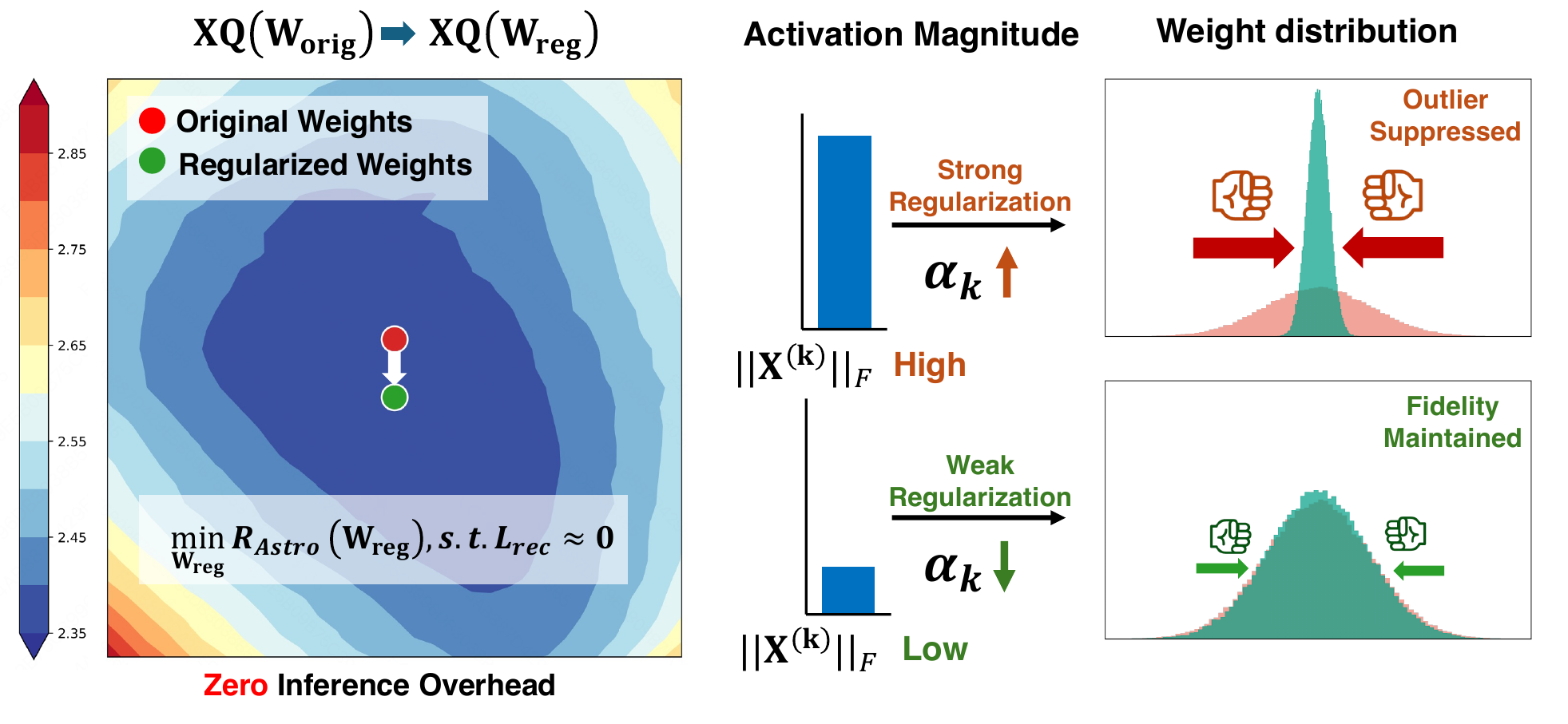}
  \vspace{-5pt}
  \caption{\textbf{Overview of the Astro Framework.} 
  \textbf{Left:} Visualization of the loss landscape for LLaMA-2-7B on WikiText-2. Astro exploits the \textit{Flat Minima} landscape of pre-trained LLMs, searching for quantization-robust weights (Green) that are functionally equivalent to the original (Red). \textbf{Right:} The search is driven by \textit{Activation-guided Structured Regularization}. By coupling regularization strength $\alpha_k$ with activation magnitude $\|\mathbf{X}^{(k)}\|_F$, Astro adaptively suppresses critical weight outliers in high-activation groups while preserving reconstruction fidelity elsewhere. The reconstructed weights directly replace the original, introducing \textbf{zero inference latency}.}
  \label{fig:astro_overview}
  \vspace{-15pt}
\end{figure*}

Guided by this insight, we propose \textbf{Astro} (\underline{\textbf{A}}ctivation-guided \underline{\textbf{St}}ructured \underline{\textbf{R}}egularizati\underline{\textbf{o}}n), a novel plug-and-play framework for outlier-robust LLM quantization. We directly exploit the degrees of freedom provided by these flat minima to actively \textit{reconstruct} the weights toward a configuration that is intrinsically robust to outlier quantization while maintaining model accuracy. 
Guided by our theoretical error bound analysis in Theorem \ref{thm:bound}, we propose an activation-guided structured regularization objective: for critical groups dominated by activation outliers, Astro imposes strong regularization to aggressively suppress weight outliers, thereby blocking the amplification of quantization error. Conversely, in the stable groups, constraints are relaxed to allow weights to fluctuate flexibly within the solution space, maximizing reconstruction fidelity.

Unlike methods that require high-precision decomposition or equivalent transformations, Astro reconstructs and replaces the original weights with robust weights. This ensures compatibility with standard kernels and introduces absolutely no latency overhead during inference. Moreover, Astro can be seamlessly integrated with standard methods (e.g., GPTQ) to further unlock compression potential. 

\begin{itemize}
    \item \textbf{Paradigm Shift via Flat Minima:} We theoretically substantiate that LLMs converge to \textit{Flat Minima}, implying a continuous solution space equivalent to the pre-trained weights (Theorem~\ref{thm:flat_minima}). Leveraging this, we propose a new paradigm that exploits the degrees of freedom within this solution space to \textit{actively reconstruct} weights. This allows us to identify weights that are numerically robust to quantization while strictly preserving original accuracy.
    \item \textbf{Theoretically Grounded Algorithm:}  We derive a quantization error upper bound that reveals a multiplicative coupling between weight outliers and activation magnitudes (Theorem~\ref{thm:bound}). Guided by this, we introduce \textbf{Astro}, an activation-guided structured regularization framework that effectively suppresses weight outliers in high-magnitude activation groups, effectively alleviating the quantization collapse caused by outliers.
    \item \textbf{Zero Inference Overhead and Efficiency:} Extensive experiments demonstrate that Astro achieves highly competitive performance across various low-bit settings. Crucially, Astro guarantees zero inference overhead and hardware friendliness, as the reconstructed weights directly replace the originals. Furthermore, it serves as a highly efficient plug-and-play enhancer for standard methods (e.g., GPTQ), achieving superior results while requiring approximately $1/3$ of the quantization time compared to complex rotation-based methods (Figure \ref{fig:efficiency_comparison}).
\end{itemize}

\section{Related Work}
\subsection{Post-Training Quantization for LLMs}
As Large Language Models (LLMs) scale, post-training quantization (PTQ) has become a widely adopted compression paradigm.
While early round-to-nearest (RTN) methods are simple, they often degrade accuracy.
To address this, optimization-based approaches were introduced: AdaRound \cite{nagel2020adaround} pioneered adaptive rounding optimization to minimize local errors.
Building on this, BRECQ \cite{li2021brecq} advanced the field with block-wise reconstruction, while GPTQ \cite{frantar2023optq} further leveraged second-order Hessian information to enable efficient layer-wise weight updates. Despite their efficacy, these methods overlook the prevalence of extreme outliers in both weights and activations of LLMs, leading to severe performance degradation in low-bit regimes. In contrast, Astro explicitly suppresses outliers by leveraging the flexibility of flat minima to actively reconstruct robust weights without compromising model accuracy. 

\subsection{Outlier Suppression Techniques}
The primary bottleneck in low-bit LLM quantization is the presence of extreme outliers in both weights and activations.
Existing mitigation strategies generally fall into two categories: differential treatment and equivalent mathematical transformation. 

Differential treatment methods like LLM.int8() \cite{nipsllmint8} and SpQR \cite{dettmers2024spqr} decompose matrices to isolate outliers, storing them in high precision (e.g., FP16) while quantizing the rest.
While accurate, this approach creates irregular memory access patterns and requires conditional branch execution, which significantly hampers hardware parallelism and increases inference latency. Equivalent mathematical transformation methods are applied to smooth distributions of weight or activation for easier quantization.
AWQ \cite{lin2024awq} and OmniQuant \cite{shao2024omniquant} leverage channel-wise scaling to suppress the impact of outliers, migrating the quantization difficulty. QuIP \cite{chee2023quip} and QuaRot \cite{ashkboos2024quarot} introduce rotation transformations to disperse outliers across all channels. SpinQuant \cite{liu2025spinquant} further advances this approach by optimizing these rotation matrices via learnable optimization.

However, these methods face significant limitations. First, rotation-based methods modify the model inference architecture, resulting in either additional inference latency or the need for complex operator fusion. Detailed analyses are presented in Appendix \ref{app:inference_overhead}. Second, as illustrated in Figure \ref{fig:efficiency_comparison}, methods like OmniQuant and SpinQuant often require expensive optimization processes to determine transformation matrices. In contrast, Astro leverages lightweight structured regularization to directly replace original pre-trained weights with quantization-robust alternatives, ensuring zero additional inference latency.

\subsection{Regularization-based Quantization}
Regularization techniques, traditionally used to constrain model complexity, have recently been explored for improving quantization robustness. \citet{kundu2023r2loss} incorporates regularization into the training objective to constrain weight ranges. However, this approach falls within the scope of quantization-aware training (QAT), rendering it impractical for LLMs due to high training costs. \citet{maly2023simple} exploits to find equivalent and more quantization-friendly weights via a deterministic pre-processing step. However, this method is restricted to small-scale architectures (e.g., ResNet) and failed to address the distinct outlier characteristics of transformer-based LLMs. 
Recently, MagR \cite{zhang2024magr} extends such regularization to LLM PTQ by leveraging the empirical rank deficiency of activation matrices. 
However, MagR relies on a \textit{heuristic} regularization objective that lacks rigorous theoretical guidance. 
It treats weights as isolated entities, applying uniform regularization to all weights, ignoring the critical fact that quantization error is multiplicatively coupled with activation magnitude.
Consequently, it fails to effectively suppress significant weight outliers corresponding to high-magnitude activation, leading to suboptimal quantization performance. 
In contrast, Astro is guided by our derived theoretical error bound. 
By exploiting the degrees of freedom provided by \textit{Flat Minima}, whose existence is theoretically substantiated in Theorem~\ref{thm:flat_minima}, Astro explicitly incorporates the activation magnitude into the regularization, thereby achieving a theoretically grounded equilibrium between aggressive outlier suppression and weight reconstruction fidelity.

\begin{figure}[t]
  \centering
  \vspace{-5pt}
  \includegraphics[width=0.75\linewidth]{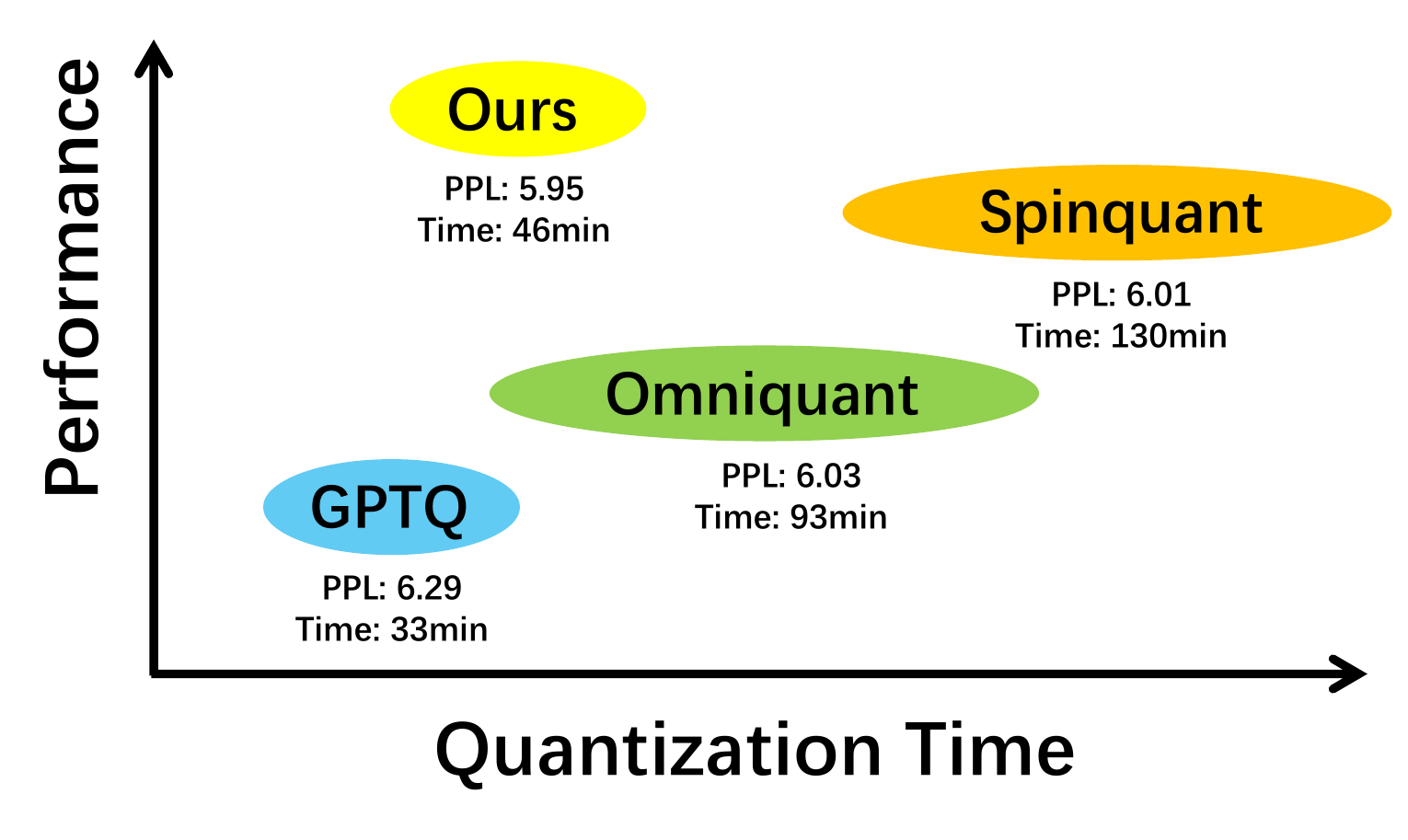}
   \vspace{-7pt}
  \caption{Efficiency vs. Performance Trade-off (LLaMA-2-7B, W3A16g128). Comparison of perplexity (PPL $\downarrow$) and total quantization time. ``W3A16g128'' denotes 3-bit weights, 16-bit activations, and a group size of 128.}
  \vspace{-15pt}
  \label{fig:efficiency_comparison}
\end{figure}

\section{Preliminaries}
\label{sec:preliminaries}

In this section, we formally define the problem of weight-only post-training quantization and analyze the limitations of existing approaches in handling outliers in LLMs.

\subsection{Problem Formulation}
\label{subsec:problem_formulation}

Consider a standard linear layer $\mathbf{Y} = \mathbf{X}\mathbf{W}$, where $\mathbf{X} \in \mathbb{R}^{N \times C_{in}}$ represents the input activation matrix and $\mathbf{W} \in \mathbb{R}^{C_{in} \times C_{out}}$ denotes the pre-trained floating-point weight matrix. Here, $N$, $C_{in}$, and $C_{out}$ denote the number of tokens, input channels, and output channels, respectively.

To enhance inference efficiency, weight-only PTQ typically adopts \textit{symmetric uniform quantization}. The quantization function $\mathcal{Q}(\cdot)$ is defined as:
\begin{equation}
\mathcal{Q}(\mathbf{W}) = \mathbf{S} \cdot \text{clamp}\left( \left\lfloor \frac{\mathbf{W}}{\mathbf{S}} \right\rceil, -2^{b-1}, 2^{b-1}-1 \right),
\end{equation}
where $\mathcal{Q}(\mathbf{W})$ denotes the dequantized weights, $b$ represents the bit-width, $\lfloor \cdot \rceil$ indicates the round-to-nearest-integer operation, and $\mathbf{S}$ denotes the matrix of scaling factors. To mitigate quantization errors, modern LLMs widely adopt \textit{fine-grained group-wise quantization} \cite{frantar2023optq, lin2024awq}. This strategy significantly improves quantization precision while maintaining hardware efficiency and introducing negligible storage overhead. Specifically, for every output channel, we partition the input dimension $C_{in}$ into contiguous blocks of size $g$. Let $\mathbf{w} \in \mathbb{R}^g$ denote the weight vector within such a specific group. The corresponding scalar scaling factor $s$ is computed as:
\begin{equation} 
s = \frac{\max(|\mathbf{w}|)}{2^{b-1}-1}. 
\end{equation}
Given a small calibration dataset $\mathcal{D}_{calib}$, the primary objective of LLM PTQ is to identify a quantized weight $\mathcal{Q}(\mathbf{W})$ that approximates the full-precision model's output. This is typically modeled as minimizing the layer-wise reconstruction error \cite{frantar2023optq, nagel2020adaround}:
\begin{equation}
\min_{\mathcal{Q}(\mathbf{W})} \mathbb{E}_{\mathbf{X} \sim \mathcal{D}_{calib}} \left[ \| \mathbf{X}\mathbf{W} - \mathbf{X}\mathcal{Q}(\mathbf{W}) \|_F^2 \right].
\end{equation}

\subsection{Challenges of Outliers and Existing Solutions}

As illustrated in Figure \ref{fig:outliers}, both weights and activations of LLMs exhibit substantial outliers, thereby hindering model performance under low-bit quantization. To address this, current mainstream approaches attempt to mitigate this via equivalent mathematical transformations. These methods can be unified as searching for a linear operator $\mathbf{T}$:
\begin{equation}
\mathbf{Y} = \mathbf{X}\mathbf{W} = (\mathbf{X}\mathbf{T}^{-1})(\mathbf{T}\mathbf{W}) = \mathbf{X}'\mathbf{W}'.
\end{equation}
Here, $\mathbf{T}$ is designed to obtain smoother distributions $\mathbf{W}'$ or $\mathbf{X}'$. Common instances include channel-wise scaling (diagonal $\mathbf{T}$) and rotation (orthogonal $\mathbf{T}$).

However, these methods often suffer from the following limitations. 
(i) While the weight transformation $\mathbf{T}\mathbf{W}$ can be fused offline, the activation transformation $\mathbf{X}\mathbf{T}^{-1}$ may increase inference latency for online computation or require additional complex kernel fusion implementation. Detailed analysis is presented in Appendix \ref{app:inference_overhead}.
(ii) As illustrated in Figure \ref{fig:efficiency_comparison}, methods such as OmniQuant and SpinQuant require significant preprocessing overhead to search the optimal transformation matrix $\mathbf{T}$. These constraints highlight the urgent need for a lightweight, hardware-friendly mechanism that significantly eliminates the negative impact of outliers.

\section{Proposed Method}
\label{sec:method}
In this section, we introduce \textbf{Astro}, an \underline{\textbf{A}}ctivation-guided \underline{\textbf{St}}ructured \underline{\textbf{R}}egularizati\underline{\textbf{o}}n framework designed to aggressively suppress outliers while ensuring zero inference overhead. 
First, in Section \ref{subsec:feasibility}, we theoretically and experimentally demonstrate that by exploiting the degrees of freedom within \textit{Flat Minima}, we can actively reconstruct weights to achieve outlier suppression with zero inference overhead and model accuracy preserved.
Guided by this paradigm, Section \ref{subsec:coupling_effect} formally derives the coupling effect between weight outliers and activation magnitude, which directly motivates our activation-guided structured regularization objective.
Finally, Section~\ref{subsec:algorithm} details the regularization algorithm.

\subsection{Exploiting Flat Minima for Weight Reconstruction}
\label{subsec:feasibility}

Most current weight-only PTQ methods apply equivalent transformations (e.g., channel-wise scaling or rotation) directly to the pre-trained weights \cite{lin2024awq, shao2024omniquant, liu2025spinquant}. However, they often overlook the intrinsic loss landscape flatness of over-parameterized LLMs. As illustrated in \cref{fig:astro_overview}, the pre-trained weights of LLMs typically converge to a \textit{Flat Minima} rather than a sharp, isolated optima \cite{garipov2018loss, draxler2018essentially, maly2023simple}. This implies that in the vicinity of the original pre-trained weights $\bm{\Theta}_{\text{orig}}$, there exists a vast solution space yielding nearly identical loss values but exhibiting distinct numerical properties. We propose to exploit these degrees of freedom to actively \textit{reconstruct} the weights into a quantization-robust configuration.

Formally, we define $\mathcal{R}_{\text{quant}}$ as a metric of \textit{quantization difficulty} (to be concretized in \cref{subsec:coupling_effect}). Our goal is to solve the following constrained optimization problem:
\begin{equation}
\label{eq:global_objective}
    \min_{\bm{\Theta}} \quad \mathcal{R}_{\text{quant}}(\bm{\Theta}) \quad \text{s.t.} \quad  |\mathcal{L}(\bm{\Theta}) - \mathcal{L}(\bm{\Theta}_{\text{orig}})| \le \epsilon,
\end{equation}
where $\mathcal{L}(\cdot)$ is the global objective function and $\epsilon$ is a tolerance threshold ensuring negligible accuracy degradation. To justify its feasibility, we introduce the following theoretical framework.

\begin{assumption}[$\epsilon$-Rank Deficiency of Global Hessian]
\label{ass:hessian}
Let $\mathbf{H} = \nabla^2 \calL(\bm{\Theta}_{\text{orig}}) \in \mathbb{R}^{D \times D}$ be the global Hessian matrix at convergence, where $D$ denotes the total number of weights. Let $\{\lambda_1, \dots, \lambda_D\}$ be the eigenvalues of $\mathbf{H}$ sorted in descending order. Given the over-parameterization of LLMs, we assume $\mathbf{H}$ is effectively rank-deficient with a numerical rank $r \ll D$. Specifically, for the tail spectrum $i > r$, the curvature is bounded by a negligible scalar $\gamma$:
\begin{equation}
    \forall i > r, \quad |\lambda_i| \le \gamma, \quad \text{where } \gamma \ll \lambda_1.
\end{equation}
\end{assumption}

\begin{definition}[Flat Subspace]
\label{def:flat_subspace}
Based on Assumption~\ref{ass:hessian}, we define the \textit{Flat Subspace} $\mathcal{V}_{\text{flat}} \subset \mathbb{R}^D$ as the span of eigenvectors $\{\mathbf{v}_{r+1}, \dots, \mathbf{v}_D\}$ associated with the trivial eigenvalues. Perturbations restricted to this subspace induce minimal loss variation.
\end{definition}

\begin{theorem}[Existence of Robust Solution Region]
\label{thm:flat_minima}
Consider a pre-trained LLM parameterized by $\bm{\Theta}_{\text{orig}}$. Under Assumption~\ref{ass:hessian}, for any loss tolerance $\epsilon > 0$, there exists a convex region $\mathcal{M}_{\epsilon} \subset \mathcal{V}_{\text{flat}}$ centered at $\bm{\Theta}_{\text{orig}}$, defined as:
\begin{equation}
    \mathcal{M}_{\epsilon} = \left\{ \bm{\Theta} = \bm{\Theta}_{\text{orig}} + \Delta \bm{\Theta} \mid \Delta \bm{\Theta} \in \mathcal{V}_{\text{flat}}, \|\Delta \bm{\Theta}\|_2 \le \delta \right\},
\end{equation}
where the radius is $\delta = \sqrt{2\epsilon/\gamma}$. Within this region, for all $\bm{\Theta} \in \mathcal{M}_{\epsilon}$, the degradation of the quadratic approximation of the loss is bounded by $\epsilon$.
\end{theorem}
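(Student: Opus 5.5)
The plan is a second-order Taylor argument around $\bm{\Theta}_{\text{orig}}$. Write the quadratic model of the loss as
\begin{equation*}
\widetilde{\calL}(\bm{\Theta}_{\text{orig}}+\Delta\bm{\Theta}) = \calL(\bm{\Theta}_{\text{orig}}) + \nabla\calL(\bm{\Theta}_{\text{orig}})^\top \Delta\bm{\Theta} + \tfrac{1}{2}\Delta\bm{\Theta}^\top \mathbf{H}\,\Delta\bm{\Theta}.
\end{equation*}
Since $\bm{\Theta}_{\text{orig}}$ is taken to be a converged point (a minimum, as the phrase ``at convergence'' in Assumption~\ref{ass:hessian} suggests), I will use $\nabla\calL(\bm{\Theta}_{\text{orig}})=0$. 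Then the degradation of the quadratic approximation reduces to the pure quadratic form $\tfrac12 \Delta\bm{\Theta}^\top \mathbf{H}\Delta\bm{\Theta}$.

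First I make the vanishing linear term explicit: the statement concerns the quadratic approximation at a stationary point, so the first-order term drops out. If one wants to avoid exact stationarity, the term can be bounded by $\|\nabla\calL\|_2\,\delta$ and absorbed into the tolerance. I expect this to be the only delicate point. I would state the stationarity hypothesis openly rather than hide it, since without it the claimed radius $\sqrt{2\epsilon/\gamma}$ is not sufficient.

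Next I handle the Hessian term. Because $\mathbf{H}$ is symmetric, it has an orthonormal eigenbasis $\{\mathbf{v}_i\}$. Any $\Delta\bm{\Theta}\in\mathcal{V}_{\text{flat}}$ (Definition~\ref{def:flat_subspace}) can be written as $\Delta\bm{\Theta}=\sum_{i>r} c_i\mathbf{v}_i$ with $\sum_{i>r} c_i^2=\|\Delta\bm{\Theta}\|_2^2$. This gives
\begin{equation*}
\left|\Delta\bm{\Theta}^\top \mathbf{H}\Delta\bm{\Theta}\right| = \Bigl|\sum_{i>r}\lambda_i c_i^2\Bigr| \le \gamma\|\Delta\bm{\Theta}\|_2^2,
\end{equation*}
where the inequality follows from Assumption~\ref{ass:hessian}. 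Consequently
\begin{equation*}
|\widetilde{\calL}(\bm{\Theta})-\calL(\bm{\Theta}_{\text{orig}})| \le \tfrac{\gamma}{2}\delta^2 = \epsilon
\end{equation*}
whenever $\|\Delta\bm{\Theta}\|_2\le\delta=\sqrt{2\epsilon/\gamma}$.

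Finally I verify convexity. $\mathcal{M}_\epsilon$ is the translate by $\bm{\Theta}_{\text{orig}}$ of the intersection of a linear subspace with a Euclidean ball. Both sets are convex, so their intersection is convex and centered at $\bm{\Theta}_{\text{orig}}$. The region is also nontrivial whenever $r<D$, which holds because $r\ll D$.

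Strictly speaking, $\mathcal{M}_\epsilon$ is an affine slice through $\bm{\Theta}_{\text{orig}}$ rather than a subset of $\mathcal{V}_{\text{flat}}$ itself; I would read ``$\subset\mathcal{V}_{\text{flat}}$'' as referring to the displacement $\Delta\bm{\Theta}$. The case $\gamma=0$ is also degenerate, and I would treat it as $\delta=\infty$, meaning the whole affine subspace qualifies.
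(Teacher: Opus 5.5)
Your proposal is correct and follows essentially the same route as the paper: a second-order Taylor expansion with vanishing gradient, expansion of $\Delta\bm{\Theta}$ in the tail eigenvectors $\mathbf{v}_{r+1},\dots,\mathbf{v}_D$, and the bound $\tfrac{1}{2}\gamma\|\Delta\bm{\Theta}\|_2^2 \le \epsilon$ for $\|\Delta\bm{\Theta}\|_2 \le \sqrt{2\epsilon/\gamma}$. Your additions are clarifications the paper leaves implicit: the explicit convexity check, the affine-slice reading of $\mathcal{M}_\epsilon \subset \mathcal{V}_{\text{flat}}$, the open statement of the stationarity hypothesis, and the $\gamma=0$ case.
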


\begin{remark}
Theorem~\ref{thm:flat_minima} implies that $\bm{\Theta}_{\text{orig}}$ is not a unique solution but rather one instance on a continuous manifold. Astro aims to transition from $\bm{\Theta}_{\text{orig}}$ to a more quantization-robust candidate $\bm{\Theta} \in \mathcal{M}_{\epsilon}$.
\end{remark}

\textbf{From Global to Layer-wise Regularization.} 
Directly solving \cref{eq:global_objective} is computationally intractable. To derive a feasible algorithm, we adopt the standard assumption in Hessian-based quantization \cite{frantar2023optq, nagel2020adaround, frantar2022obq} that the inter-layer dependencies are negligible. 
This implies approximating the global Hessian $\mathbf{H}$ as a block-diagonal matrix, where each block $\mathbf{H}^{(l)} \approx 2(\mathbf{X}^{(l)})^\top \mathbf{X}^{(l)}$ corresponds to a single layer.
Consequently, the global optimization problem decomposes into independent \textbf{layer-wise} sub-problems:
\begin{equation}
\label{eq:layerwise_proxy}
    \min_{\mathbf{W}^{(l)}} \mathcal{R}_{\text{quant}}(\mathbf{W}^{(l)}) \text{  s.t.  }  \|\mathbf{X}^{(l)}\mathbf{W}^{(l)}-\mathbf{X}^{(l)}\mathbf{W}^{(l)}_{\text{orig}}\|_F \le \epsilon.
\end{equation}

\subsection{Activation-guided Structured Regularization}
\label{subsec:coupling_effect}

Having established the existence of the equivalent solution space $\mathcal{M}_{\epsilon}$, the pivotal question remains: within this solution space, which specific weight configuration minimizes quantization difficulty $\mathcal{R}_{\text{quant}}$? To answer this, we first analyze the upper bound of the quantization error.

\textbf{Theoretical Error Bound.}
Consider a single layer with weights $\bW$ and input activations $\bX$. In the context of group-wise quantization with group size $g$, let $\bX^{(k)}$ denote the activation columns corresponding to the $k$-th input group, and $\bW^{(k, j)}$ denote the weight vector for the $j$-th output channel in that group.
We derive the following upper bound for the layer-wise quantization error:
\begin{theorem}[Activation-Weight Coupled Error Bound]
\label{thm:bound}
Assume a symmetric uniform quantization with bit-width $b$. For any input batch $\bX$, the Frobenius norm of the layer-wise quantization error $\calE = \|\bX\bW - \bX\mathcal{Q}(\bW)\|_F$ is bounded by:
\begin{equation}
\label{eq:error_bound}
    \calE \le \frac{\sqrt{g}}{2 \cdot (2^{b-1}-1)} \sum_{j=1}^{C_{out}} \sum_{k=1}^{K} \|\bX^{(k)}\|_F \cdot \|\bW^{(k, j)}\|_\infty.
\end{equation}
\end{theorem}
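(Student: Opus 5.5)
We argue directly from the definition of the group-wise quantizer and chain three elementary inequalities. Write the quantization error as $\Delta = \bW - \mathcal{Q}(\bW)$. By linearity, the output error $\bX\Delta$ splits over output channels and input groups:
\begin{equation*}
\bX\Delta = \sum_{j=1}^{C_{out}} \sum_{k=1}^{K} \bX^{(k)} \Delta^{(k,j)} \mathbf{e}_j^\top ,
\end{equation*}
where $\Delta^{(k,j)} \in \mathbb{R}^g$ is the error of the $(k,j)$ group and $\mathbf{e}_j$ is the $j$-th standard basis vector of $\mathbb{R}^{C_{out}}$. The triangle inequality for the Frobenius norm, together with $\|\mathbf{a}\mathbf{e}_j^\top\|_F = \|\mathbf{a}\|_2$, then gives
\begin{equation*}
\calE \le \sum_{j}\sum_{k} \|\bX^{(k)}\Delta^{(k,j)}\|_2 .
\end{equation*}
Summing over $j$ is crude, since the columns are orthogonal and one could take a root-sum-of-squares instead. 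However, the statement is written as a plain double sum, so this crude bound is exactly what is needed.

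Second, we bound each term by submultiplicativity, $\|\bX^{(k)}\Delta^{(k,j)}\|_2 \le \|\bX^{(k)}\|_2\,\|\Delta^{(k,j)}\|_2 \le \|\bX^{(k)}\|_F\,\|\Delta^{(k,j)}\|_2$. We then pass to the sup norm via $\|\Delta^{(k,j)}\|_2 \le \sqrt{g}\,\|\Delta^{(k,j)}\|_\infty$, which produces the factor $\sqrt{g}$.

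Third, we bound the per-entry rounding error. For the group $\mathbf{w} = \bW^{(k,j)}$ the scale is $s = \|\mathbf{w}\|_\infty/(2^{b-1}-1)$. Every entry therefore satisfies $|w_i/s| \le 2^{b-1}-1$, so the clamp is inactive on both sides: the lower limit $-2^{b-1}$ is never reached, and rounding cannot exceed $2^{b-1}-1$ because that value is already an integer. The error is thus pure round-to-nearest error, $|w_i - s\lfloor w_i/s\rceil| \le s/2$. This yields $\|\Delta^{(k,j)}\|_\infty \le \|\bW^{(k,j)}\|_\infty / \bigl(2(2^{b-1}-1)\bigr)$. Substituting back gives the claimed inequality~\eqref{eq:error_bound}.

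The only step needing care is the third, namely checking that the clamp never activates under the max-based scale so that the error is uniformly at most $s/2$. The degenerate case $\mathbf{w}=0$ must also be handled: there we take $\mathcal{Q}(\mathbf{w})=0$, so the term vanishes. Everything else is routine norm manipulation.
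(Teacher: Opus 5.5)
Your proposal is correct and follows essentially the same route as the paper's proof. Both arguments bound the per-entry rounding error by $s/2$ under the max-based scale, get the $\sqrt{g}$ factor from the $\ell_2$ norm of each group's error, use $\|\bX^{(k)}\Delta^{(k,j)}\|_2 \le \|\bX^{(k)}\|_F\|\Delta^{(k,j)}\|_2$, and aggregate with the triangle inequality over groups and output channels.

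The differences are cosmetic. You apply a single triangle inequality to the full double sum, whereas the paper first decomposes column-wise and then uses $\|\mathbf{v}\|_2 \le \|\mathbf{v}\|_1$ across channels. You also explicitly verify that the clamp is inactive and handle $\mathbf{w}=0$, which the paper only assumes.
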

\begin{remark}
Theorem~\ref{thm:bound} reveals that the impact of weight outliers is not static; it is multiplicatively coupled with the magnitude of the corresponding activations. A weight outlier in a high-activation group causes significantly more damage to the model output than one in a low-activation channel.
\end{remark}
\textbf{Deriving the Regularization Objective.} Motivated by Theorem~\ref{thm:bound}, we propose to transform the abstract quantization difficulty metric $\mathcal{R}_{\text{quant}}$ in Eq.~\eqref{eq:layerwise_proxy} into a tractable regularization objective. Our goal is to explicitly minimize the derived error bound.
By treating the activation magnitude $\|\bX^{(k)}\|_F$ as a prior constant computed from calibration data, we formulate an activation-guided structured regularization objective:
\begin{equation}
\label{eq:reg_definition}
    \mathcal{R}_{\text{Astro}}(\mathbf{W}) \triangleq \sum_{j=1}^{C_{out}} \sum_{k=1}^{K} \alpha_{k} \|\mathbf{W}^{(k, j)}\|_\infty, 
\end{equation}
where the adaptive coefficient $\alpha_{k}$ is directly proportional to the group-wise activation magnitude:
\begin{equation}
    \alpha_{k} = \frac{\|\bX^{(k)}\|_F}{\frac{1}{K}\sum_{i=1}^{K}\|\bX^{(i)}\|_F}.
\end{equation}
As illustrated in Figure \ref{fig:astro_overview}, this formulation enforces a precise, theoretically grounded trade-off:
for \textbf{critical groups} (high $\alpha_k$), regularizer is forced to aggressively suppress weight outliers (minimize $\|\bW^{(k, j)}\|_\infty$) to reduce the error bound; for \textbf{stable groups} (low $\alpha_k$), the constraints are naturally relaxed, allowing the weights to fluctuate freely within the flat minima to preserve reconstruction fidelity.
Unlike MagR \cite{zhang2024magr} that applies uniform constraints, Astro employs a fine-grained, activation-guided regularization strategy to rigorously minimize the theoretical error bound.

\subsection{The Astro Algorithm Framework}
\label{subsec:algorithm}

Building upon the theoretical insights of the flat minima landscape (Section~\ref{subsec:feasibility}) and the coupling effect of outliers  (Section~\ref{subsec:coupling_effect}), we formally propose \textbf{Astro}. Our goal is to find a weight matrix that minimizes the theoretical coupled quantization error bound while remaining within the flat basin of the original loss.

\subsubsection{Column-wise Formulation}
Since both the reconstruction loss (Frobenius norm) and the structured regularization are additive across output channels, the global optimization problem decomposes into $C_{out}$ independent sub-problems. This allows us to simplify the modeling from matrix space to vector space, enabling massive parallelization.

For a single output channel $j \in \{1, \dots, C_{out}\}$. Let $\mathbf{w} \in \mathbb{R}^{C_{in}}$ denote the corresponding weight column vector (we omit the subscript $j$ for brevity), and $\mathbf{y} = \mathbf{X}\mathbf{w}_{\text{orig}}$ be the target output.
The objective function for this specific channel decouples as:
\begin{equation}
\label{eq:astro_objective_vec}
\min_{\mathbf{w}} \mathcal{J}(\mathbf{w}) = \underbrace{\frac{1}{2}\|\mathbf{X}\mathbf{w} - \mathbf{y}\|_2^2}_{\mathcal{L}_{rec}} + \underbrace{\beta \sum_{k=1}^{K} \alpha_k \|\mathbf{w}^{(k)}\|_\infty}_{\mathcal{R}_{\text{Astro}}},
\end{equation}
where $\mathbf{w}^{(k)} \in \mathbb{R}^g$ represents the sub-vector of weights corresponding to the $k$-th input group, and $\alpha_k$ is the group-specific adaptive coefficient defined in Eq.~\eqref{eq:reg_definition}. 

\subsubsection{Group-wise Proximal Gradient Descent}
\label{subsubsec:pgd}

Since the structured regularization term $\mathcal{R}_{\text{Astro}}$ in Eq.~\eqref{eq:astro_objective_vec} involves non-differentiable $L_\infty$, standard gradient descent is inapplicable. Following \citet{zhang2024magr}, we solve this using proximal gradient descent (PGD) \cite{parikh2014proximal}. For a single channel, the update at iteration $t$ consists of a gradient descent step followed by a structured projection.

\textbf{1. Gradient Step (Fidelity Update).} 
First, we update the weights to minimize the reconstruction error $\mathcal{L}_{rec}$:
\begin{equation}
    \mathbf{v}_t = \mathbf{w}_t - \eta \mathbf{X}^\top (\mathbf{X}\mathbf{w}_t - \mathbf{y}).
\end{equation}

\textbf{2. Proximal Step (Outlier Suppression).} 
Next, we apply the proximal operator to enforce the activation-guided constraints. Critically, the regularization term $\sum \alpha_k \|\mathbf{w}^{(k)}\|_\infty$ is separable across groups. Thus, the proximal mapping applies independently to each group sub-vector $\mathbf{v}_t^{(k)}$:
\begin{equation}
    \mathbf{w}_{t+1}^{(k)} = \prox_{\eta \beta \alpha_k \|\cdot\|_\infty} (\mathbf{v}_t^{(k)}).
\end{equation}
By leveraging the Moreau Decomposition \cite{parikh2014proximal}, this proximal operator has a highly efficient closed-form solution, computed via projection onto the $L_1$-ball dual norm:
\begin{equation}
\label{eq:moreau_proj}
\mathbf{w}_{t+1}^{(k)} = \mathbf{v}_t^{(k)} - \tau_k \cdot \operatorname{proj}_{\mathcal{B}_1} \left( \frac{\mathbf{v}_t^{(k)}}{\tau_k} \right), \quad \text{with} \quad \tau_k = \eta \beta \alpha_k.
\end{equation}
Here, $\operatorname{proj}_{\mathcal{B}_1}$ projects a vector onto the unit $L_1$-ball, which can be solved in $\mathcal{O}(g \log g)$ time (see Algorithm \ref{alg:l1_proj} in Appendix). 
To ensure convergence, we set the step size to the tightest bound $\eta = 1/\lambda_{\max}(\mathbf{X}^\top \mathbf{X})$, where $\lambda_{\max}$ corresponds to the Lipschitz constant of the linear least-squares gradient \cite{zhang2024magr}.

\begin{table*}[t]
\centering
\caption{Weight-only quantization results (PPL) of LLaMA-2 models on WikiText2. Results for FP16, RTN, GPTQ, AWQ, and OmniQuant are taken from  \citet{shao2024omniquant}.}
\vspace{-5pt}
\label{tab:llama2_quant_results}
\small
\setlength{\tabcolsep}{4pt}
\begin{tabular*}{\textwidth}{@{\extracolsep{\fill}}lccccccccc}
\toprule
\multirow{3}{*}{\textbf{Method}} & \multicolumn{3}{c}{\textbf{2-bit / Group-size64}} & \multicolumn{3}{c}{\textbf{3-bit / Group-size128}} & \multicolumn{3}{c}{\textbf{4-bit / Group-size128}} \\
\cmidrule(lr){2-4} \cmidrule(lr){5-7} \cmidrule(lr){8-10}
 & \textbf{7B} & \textbf{13B} & \textbf{70B} & \textbf{7B} & \textbf{13B} & \textbf{70B} & \textbf{7B} & \textbf{13B} & \textbf{70B} \\
\midrule
FP16 & 5.47 & 4.88 & 3.31 & 5.47 & 4.88 & 3.31 & 5.47 & 4.88 & 3.31 \\
\midrule
RTN & 431.97 & 26.22 & 10.31 & 6.66 & 5.51 & 3.97 & 5.72 & 4.98 & 3.46 \\
GPTQ & 20.85 & 22.44 & NAN & 6.29 & 5.42 & 3.85 & 5.61 & 4.98 & 3.42 \\
AWQ & 2.1e5 & 1.2e5 & - & 6.24 & 5.32 & - & 5.62 & 4.97 & - \\
OmniQuant & 9.62 & 7.56 & 6.11 & 6.03 & 5.28 & 3.78 & 5.58 & 4.95 & 3.40 \\
SpinQuant & 21.83 & 10.17 & - & 6.01 & 5.29 & - & \textbf{5.56} & 4.96 & - \\
MagR & 11.11 & 9.66 & 5.73 & 6.00 & 5.23 & 3.71 & 5.61 & 4.96 & 3.43 \\
Astro & \textbf{8.98} & \textbf{7.43} & \textbf{5.20} & \textbf{5.95} & \textbf{5.20} & \textbf{3.69} & \textbf{5.56} & \textbf{4.94} & \textbf{3.38} \\
\bottomrule
\end{tabular*}
\vspace{-10pt}
\end{table*}

\section{Experiments}
\label{sec:experiments}

In this section, we evaluate the effectiveness of Astro across two key dimensions: perplexity and downstream task accuracy. Additionally, we conduct ablation studies to verify the effectiveness of our method. In all reported experiments, both Astro and MagR are implemented as pre-processing modules integrated with GPTQ. Our evaluation primarily focuses on group-wise weight-only quantization, a standard configuration widely adopted in industrial applications.

\subsection{Experimental Settings}
\label{subsec:setup}

\textbf{Models and Datasets.} We conduct extensive evaluations on the LLaMA-2 model family \cite{touvron2023llama2}, spanning the 7B, 13B, and 70B parameter scales. Results for LLaMA-3.1 \cite{grattafiori2024llama3} are also provided in Appendix \ref{app:llama3_results} for further comparison. 

For the calibration process, we randomly sample a training set of 128 sequences from WikiText-2 \cite{merity2016pointer-wiki}, each with a length of 2,048 tokens. To comprehensively assess quantization quality, we employ three categories of benchmarks: 
(1) \textbf{Perplexity (PPL)} evaluated on the WikiText-2 test set; 
(2) \textbf{Zero-shot benchmarks}, including ARC-Challenge, ARC-Easy \cite{clark2018think-arc}, and HellaSwag \cite{zellers2019hellaswag}; 
(3) \textbf{Few-shot (5-shot) benchmarks}, comprising MMLU \cite{hendrycks2021measuring-mmlu}, TriviaQA \cite{joshi2017triviaqa}, and Natural Questions (NQ) \cite{kwiatkowski2019natural}.

\textbf{Baselines.} We compare Astro against a diverse set of mainstream quantization methods to demonstrate its superiority. 
(1) RTN (Round-to-Nearest) is included as the vanilla baseline. 
(2) GPTQ \cite{frantar2023optq} is selected as the representative Hessian-guided layer-wise reconstruction approach. 
(3) Channel-wise scaling methods, including AWQ \cite{lin2024awq} and OmniQuant \cite{shao2024omniquant}. 
(4) Rotation-based method SpinQuant \cite{liu2025spinquant}.
(5) Regularization-based method MagR \cite{zhang2024magr}.

\textbf{Implementation Details.} We employ 200 iterations of proximal gradient descent (PGD) to solve the regularization objective per layer. Regarding the quantization configuration, we adopt a group size of 128 for 3-bit and 4-bit settings, while reducing the group size to 64 for the 2-bit case to mitigate accuracy degradation at extremely low bit-widths. The regularization hyperparameter $\beta$ is tuned to strike a balance between reconstruction fidelity and outlier suppression; a detailed sensitivity analysis is provided in Section \ref{subsec:ablation}. For evaluation, we utilize the \texttt{lm-evaluation-harness} framework~\cite{eval-harness} for all zero-shot and few-shot benchmarks. All experiments are conducted on NVIDIA A100 GPUs.

\subsection{Perplexity Evaluation}
\label{subsec:wikitext_results}

As summarized in Table \ref{tab:llama2_quant_results}, Astro consistently demonstrates superior performance compared to established baselines across various model scales and bit-widths. This advantage is particularly evident in the challenging 2-bit regime, where the extreme scarcity of representation levels renders models highly vulnerable to outliers. Specifically, Astro significantly outperforms AWQ, a representative grid-search-based equivalent transformation method. Even when compared to heavy optimization-based approaches such as OmniQuant and SpinQuant, Astro achieves comparable or even superior accuracy. 

Beyond predictive performance, Astro offers significant advantages in terms of computational efficiency. As illustrated in Figure \ref{fig:efficiency_comparison}, Astro typically reduces quantization time by nearly 50\% or more compared to SpinQuant and OmniQuant, while maintaining higher performance. Furthermore, Astro serves as an effective pre-processing step that mitigates outliers, providing a seamless, plug-and-play enhancement to GPTQ. Additional experiments for enhancing RTN are provided in Appendix \ref{app:ablition_rtn}.
Finally, the substantial performance margin over MagR empirically validates that activation-guided structured regularization is indispensable for preserving the model's representational capacity in the presence of extreme outliers.

\begin{table*}[t]
\centering
\caption{Main results of \textbf{LLaMA-2} on standard benchmarks. The best and second-best results among quantization methods are highlighted in \textbf{bold} and \underline{underline}, respectively.}
\vspace{-3pt}
\label{tab:benchmark_comparison}
\small
\begin{tabular}{ll|ccccc|cc|ccc}
\toprule

\multirow{2}{*}{\textbf{Size}} & \multirow{2}{*}{\textbf{Method}} & 
\multicolumn{5}{c|}{\textbf{MMLU (5-shot)}} & 
\multicolumn{2}{c|}{\textbf{Knowledge (5-shot)}} & 
\multicolumn{3}{c}{\textbf{Common Sense (0-shot)}} \\

\cmidrule(lr){3-7} \cmidrule(lr){8-9} \cmidrule(lr){10-12}

& & \textbf{Avg.} & \textit{STEM} & \textit{Human.} & \textit{Social} & \textit{Other} 
& \textbf{TriviaQA} & \textbf{NQ} 
& \textbf{ARC-C} & \textbf{ARC-E} & \textbf{HellaS.} \\
\midrule

\multirow{4}{*}{\textbf{7B}} 
& fp16        & 45.9 & 37.4 & 43.0 & 51.9 & 53.2 & 64.4 & 26.7 & 45.1 & 73.8 & 76.2 \\
\cmidrule{2-12}
& OmniQuant & \underline{39.8} & \textbf{33.9} & \underline{36.9} & \underline{43.3} & \underline{46.9} & 54.4 & \underline{20.4} & \underline{42.2} & \underline{71.1} & \textbf{73.4} \\
& MagR        & 37.3 & 31.8 & 34.5 & 40.7 & 43.3 & \underline{55.3} & \underline{20.4} & 40.5 & 69.7 & 70.2 \\
& Astro       & \textbf{40.5} & \underline{32.6} & \textbf{38.2} & \textbf{45.0} & \textbf{47.6} & \textbf{55.9} & \textbf{20.5} & \textbf{43.3} & \textbf{71.2} & \underline{72.2} \\
\midrule

\multirow{4}{*}{\textbf{13B}} 
& fp16        & 55.4 & 44.4 & 53.8 & 63.5 & 61.3 & 70.4 & 30.8 & 48.8 & 76.6 & 79.7 \\
\cmidrule{2-12}
& OmniQuant & 51.0 & \underline{42.8} & \underline{47.5} & \underline{58.8} & 56.9 & 64.5 & 25.5 & \underline{46.5} & 73.4 & \textbf{76.9} \\
& MagR        & \underline{51.2} & 42.2 & \textbf{48.5} & 58.3 & \underline{57.3} & \underline{65.2} & \textbf{26.5} & \textbf{47.4} & \textbf{75.6} & 74.2 \\
& Astro       & \textbf{52.2} & \textbf{43.1} & \textbf{48.5} & \textbf{60.4} & \textbf{58.8} & \textbf{66.6} & \textbf{26.5} & 46.2 & \underline{74.7} & \underline{76.2} \\
\midrule

\multirow{4}{*}{\textbf{70B}} 
& fp16        & 69.2 & 58.3 & 65.2 & 80.5 & 75.2 & 80.0 & 38.8 & 57.2 & 79.5 & 84.4 \\
\cmidrule{2-12}
& OmniQuant & \textbf{67.3} & \textbf{55.2} & \underline{64.1} & \textbf{78.5} & \textbf{73.1} & 76.8 & \textbf{36.1} & 54.3 & 78.5 & \textbf{83.1} \\
& MagR        & 66.1 & 53.9 & 63.0 & 77.0 & \underline{72.4} & \textbf{77.5} & 35.7 & \textbf{56.2} & \textbf{79.5} & 81.1 \\
& Astro       & \underline{66.9} & \underline{54.7} & \textbf{64.2} & \underline{77.9} & 72.2 & \textbf{77.5} & \underline{35.8} & \underline{54.8} & \underline{79.3} & \underline{81.7} \\

\bottomrule
\end{tabular}
\vspace{-5pt}
\end{table*}

\subsection{Zero-shot and Few-shot Performance}
\label{subsec:benchmark_results}

Our comparative analysis focuses on the challenging 3-bit quantization regime. We select OmniQuant and MagR as primary baselines, given that these methods demonstrate competitive performance across various bit settings in the perplexity evaluations presented in Table \ref{tab:llama2_quant_results}. 

On smaller model scales, Astro consistently outperforms the baselines across the majority of tasks. In particular, Astro achieves substantial performance gains over MagR on the MMLU benchmark for both 7B and 13B models. These results suggest that activation-guided reconstruction effectively preserves the critical weights required for complex reasoning.
For the LLaMA-2-70B model, Astro delivers a performance margin over MagR across the majority of evaluated benchmarks, providing strong empirical evidence for the superiority of our activation-guided structured regularization at scale. While OmniQuant exhibits marginal advantages in specific tasks like MMLU, it suffers from instability in other domains (e.g., TriviaQA). In contrast, Astro demonstrates the most balanced performance profile across diverse datasets. Crucially, Astro achieves these competitive results while significantly improving quantization efficiency, offering a superior overall trade-off.
Additional results for LLaMA-3.1 are provided in Appendix \ref{app:llama3_results}.

\subsection{Ablation Studies}
\label{subsec:ablation}

We conduct ablation studies on LLaMA-2-7B (W3A16g128) to validate the effectiveness of the proposed regularization components and verify our theoretical assumptions.

\begin{table}[htbp]
\centering
\vspace{-2pt}
\caption{WikiText2 PPL ($\downarrow$) of LLaMA-2-7B (W3A16g128) under varying regularization strengths $\beta$ and different strategies of regularization coefficient $\alpha_k$.}
\vspace{-5pt}
\label{tab:lambda_sensitivity}
\small
\begin{tabularx}{\columnwidth}{lcccccc}
\toprule
\textbf{$\beta$}& \textbf{5e-6} &\textbf{1e-5} & \textbf{3e-5} & \textbf{5e-5} & \textbf{1e-4} & \textbf{5e-4} \\
\midrule
$\alpha_k \propto \|\mathbf{X}^{(k)}\|_F$ & 5.99 & 6.02 & \textbf{5.95} & \textbf{5.95} & 6.05 & 34.91 \\
$\alpha_k = 1$ & 6.00 & 6.03 & 6.02 & 6.01 & 6.04 & 36.41 \\
\bottomrule
\end{tabularx}
\end{table}

\textbf{Impact of Hyperparameter $\beta$ and Activation Guidance.} 
The hyperparameter $\beta$ in Eq.~\eqref{eq:astro_objective_vec} governs the trade-off between reconstruction fidelity and outlier suppression. As shown in Table \ref{tab:lambda_sensitivity}, an insufficiently small $\beta$ fails to effectively curb outliers, resulting in suboptimal performance, whereas an excessive $\beta$ violates the flat minima constraint, leading to accuracy collapse. Furthermore, we investigate the necessity of the activation-guided coefficient $\alpha_k$ (Eq.~\ref{eq:reg_definition}) by comparing it against a uniform regularization ($\alpha_k = 1$) under varying regularization strengths $\beta$, highlighting the necessity of activation-guided regularization.

\begin{table}[htbp]
\centering
\vspace{-4pt}
\caption{Comparison of WikiText2 PPL ($\downarrow$) between the original pre-trained weights and Astro-reconstructed weights among the LLaMA-2 family.}
\vspace{-3pt}
\label{tab:reconstruction_fidelity}
\small
\begin{tabularx}{\columnwidth}{lccc}
\toprule
\textbf{Size} & \textbf{Original PPL} & \textbf{Reconstructed PPL} & \textbf{$\Delta$ PPL} \\
 & (FP16) & (FP16, Unquantized) & \\
\midrule
7B  & 5.47 & 5.48 & +0.01 \\
13B & 4.88 & 4.88 & +0.00 \\
70B & 3.31 & 3.33 & +0.02 \\
\bottomrule
\end{tabularx}
\vspace{-5pt}
\end{table}

\textbf{Validation of the Flat Minima Assumption.}
Astro assumes the existence of an $\epsilon$-equivalent solution manifold where weights can be adjusted without accuracy loss (Theorem \ref{thm:flat_minima}). 
To validate this, we evaluate the perplexity of the \textit{unquantized} reconstructed weights.
As shown in Table \ref{tab:reconstruction_fidelity}, the performance gap compared to the original FP16 weights is negligible across all scales.
This confirms that Astro strictly optimizes quantization robustness within the flat basin of the original loss landscape.

\begin{table}[htbp]
\centering
\caption{Quantitative comparison of Astro-reconstructed weight changes (LLaMA-2-7B, W3A16g128) in three specific groups.}
\vspace{-3pt}
\label{tab:mechanism_analysis}
\small
\setlength{\tabcolsep}{1.3pt}
\begin{tabularx}{\columnwidth}{lccccc}
\toprule
\multirow{2}{*}{\textbf{Group Type}} & \multirow{2}{*}{\textbf{$\|\mathbf{X}^{(k)}\|_F$}} & \multirow{2}{*}{$\alpha_k$} & \multicolumn{3}{c}{\textbf{Max Weight Magnitude} ($\|\mathbf{w}\|_\infty$)} \\
\cmidrule(lr){4-6}
 &  &  & \textbf{Original} & \textbf{Astro} & \textbf{Reduction} \\
\midrule
\textbf{Highest Act.} & 1.886 & 3.305 & 0.100 & 0.062 &  -38.1\% \\
\textbf{Median Act.}  & 0.412 &0.721 & 0.118  & 0.104 & -12.2\% \\
\textbf{Lowest Act.}  & 0.142  &0.248 &  0.109 &0.104 & -4.6 \% \\
\bottomrule
\end{tabularx}
\end{table}

\textbf{Microscopic Mechanism Analysis.}
To examine the treatment of the outlier coupling effect (Theorem \ref{thm:bound}), we evaluate weight magnitude changes across groups with different activation levels.
As shown in Table \ref{tab:mechanism_analysis}, Astro imposes aggressive regularization on the highest-activation groups while applying minimal constraints to low-activation ones, thereby effectively minimizing the theoretical quantization error bound.

\section{Conclusion}
In this paper, we presented \textbf{Astro}, a novel post-training quantization framework that effectively mitigates the accuracy degradation caused by outliers in LLMs. Departing from existing equivalent transformation paradigms, Astro establishes an active weight reconstruction approach that incurs zero additional inference overhead while maintaining seamless compatibility with standard hardware kernels. By leveraging the intrinsic degrees of freedom within the \textit{Flat Minima} of over-parameterized LLMs, we theoretically demonstrate that it is possible to identify a quantization-robust weight configuration strictly equivalent to the original pre-trained weights. Guided by our derived activation-weight coupled error bound, Astro employs an activation-guided structured regularization strategy. This mechanism effectively suppresses the critical weight outliers corresponding to high-activation without model performance degradation. Extensive experiments confirm that Astro achieves competitive performance against mainstream baselines and superior balance between quantization accuracy and efficiency.

\section*{Impact Statement}

This paper presents work whose goal is to advance the field of machine learning. There are many potential societal consequences of our work, none of which we feel must be specifically highlighted here.

\bibliography{example_paper}

@article{lin2024awq,
  title={Awq: Activation-aware weight quantization for on-device llm compression and acceleration},
  author={Lin, Ji and Tang, Jiaming and Tang, Haotian and Yang, Shang and Chen, Wei-Ming and Wang, Wei-Chen and Xiao, Guangxuan and Dang, Xingyu and Gan, Chuang and Han, Song},
  journal={Proceedings of machine learning and systems},
  volume={6},
  pages={87--100},
  year={2024}
}

@inproceedings{
frantar2023optq,
title={{OPTQ}: Accurate Quantization for Generative Pre-trained Transformers},
author={Elias Frantar and Saleh Ashkboos and Torsten Hoefler and Dan Alistarh},
booktitle={The Eleventh International Conference on Learning Representations },
year={2023},
url={https://openreview.net/forum?id=tcbBPnfwxS}
}

@inproceedings{
huang2024billm,
title={Bi{LLM}: Pushing the Limit of Post-Training Quantization for {LLM}s},
author={Wei Huang and Yangdong Liu and Haotong Qin and Ying Li and Shiming Zhang and Xianglong Liu and Michele Magno and XIAOJUAN QI},
booktitle={Forty-first International Conference on Machine Learning},
year={2024},
url={https://openreview.net/forum?id=qOl2WWOqFg}
}

@inproceedings{nipsllmint8,
 author = {Dettmers, Tim and Lewis, Mike and Belkada, Younes and Zettlemoyer, Luke},
 booktitle = {Advances in Neural Information Processing Systems},
 editor = {S. Koyejo and S. Mohamed and A. Agarwal and D. Belgrave and K. Cho and A. Oh},
 pages = {30318--30332},
 publisher = {Curran Associates, Inc.},
 title = {GPT3.int8(): 8-bit Matrix Multiplication for Transformers at Scale},
 volume = {35},
 year = {2022}
}

@inproceedings{xiao2023smoothquant,
  title={Smoothquant: Accurate and efficient post-training quantization for large language models},
  author={Xiao, Guangxuan and Lin, Ji and Seznec, Mickael and Wu, Hao and Demouth, Julien and Han, Song},
  booktitle={International conference on machine learning},
  pages={38087--38099},
  year={2023},
  organization={PMLR}
}

@inproceedings{
kim2024squeezellm,
title={Squeeze{LLM}: Dense-and-Sparse Quantization},
author={Sehoon Kim and Coleman Richard Charles Hooper and Amir Gholami and Zhen Dong and Xiuyu Li and Sheng Shen and Michael W. Mahoney and Kurt Keutzer},
booktitle={Forty-first International Conference on Machine Learning},
year={2024},
url={https://openreview.net/forum?id=0jpbpFia8m}
}

@inproceedings{lee2024owq,
  title={Owq: Outlier-aware weight quantization for efficient fine-tuning and inference of large language models},
  author={Lee, Changhun and Jin, Jungyu and Kim, Taesu and Kim, Hyungjun and Park, Eunhyeok},
  booktitle={Proceedings of the AAAI Conference on Artificial Intelligence},
  volume={38},
  number={12},
  pages={13355--13364},
  year={2024}
}

@article{zhao2024atom,
  title={Atom: Low-bit quantization for efficient and accurate llm serving},
  author={Zhao, Yilong and Lin, Chien-Yu and Zhu, Kan and Ye, Zihao and Chen, Lequn and Zheng, Size and Ceze, Luis and Krishnamurthy, Arvind and Chen, Tianqi and Kasikci, Baris},
  journal={Proceedings of Machine Learning and Systems},
  volume={6},
  pages={196--209},
  year={2024}
}

@inproceedings{
shao2024omniquant,
title={OmniQuant: Omnidirectionally Calibrated Quantization for Large Language Models},
author={Wenqi Shao and Mengzhao Chen and Zhaoyang Zhang and Peng Xu and Lirui Zhao and Zhiqian Li and Kaipeng Zhang and Peng Gao and Yu Qiao and Ping Luo},
booktitle={The Twelfth International Conference on Learning Representations},
year={2024},
url={https://openreview.net/forum?id=8Wuvhh0LYW}
}

@inproceedings{
liu2025spinquant,
title={SpinQuant: {LLM} Quantization with Learned Rotations},
author={Zechun Liu and Changsheng Zhao and Igor Fedorov and Bilge Soran and Dhruv Choudhary and Raghuraman Krishnamoorthi and Vikas Chandra and Yuandong Tian and Tijmen Blankevoort},
booktitle={The Thirteenth International Conference on Learning Representations},
year={2025},
url={https://openreview.net/forum?id=ogO6DGE6FZ}
}

@article{ashkboos2024quarot,
  title={Quarot: Outlier-free 4-bit inference in rotated llms},
  author={Ashkboos, Saleh and Mohtashami, Amirkeivan and Croci, Maximilian L and Li, Bo and Cameron, Pashmina and Jaggi, Martin and Alistarh, Dan and Hoefler, Torsten and Hensman, James},
  journal={Advances in Neural Information Processing Systems},
  volume={37},
  pages={100213--100240},
  year={2024}
}

@article{garipov2018loss,
  title={Loss surfaces, mode connectivity, and fast ensembling of dnns},
  author={Garipov, Timur and Izmailov, Pavel and Podoprikhin, Dmitrii and Vetrov, Dmitry P and Wilson, Andrew G},
  journal={Advances in neural information processing systems},
  volume={31},
  year={2018}
}

@inproceedings{draxler2018essentially,
  title={Essentially no barriers in neural network energy landscape},
  author={Draxler, Felix and Veschgini, Kambis and Salmhofer, Manfred and Hamprecht, Fred},
  booktitle={International conference on machine learning},
  pages={1309--1318},
  year={2018},
  organization={PMLR}
}

@article{kundu2023r2loss,
  title={R2 loss: Range restriction loss for model compression and quantization},
  author={Kundu, Arnav and Yoo, Chungkuk and Mishra, Srijan and Cho, Minsik and Adya, Saurabh},
  journal={arXiv preprint arXiv:2303.08253},
  year={2023}
}

@article{maly2023simple,
  title={A simple approach for quantizing neural networks},
  author={Maly, Johannes and Saab, Rayan},
  journal={Applied and Computational Harmonic Analysis},
  volume={66},
  pages={138--150},
  year={2023},
  publisher={Elsevier}
}

@inproceedings{
li2021brecq,
title={{\{}BRECQ{\}}: Pushing the Limit of Post-Training Quantization by Block Reconstruction},
author={Yuhang Li and Ruihao Gong and Xu Tan and Yang Yang and Peng Hu and Qi Zhang and Fengwei Yu and Wei Wang and Shi Gu},
booktitle={International Conference on Learning Representations},
year={2021},
url={https://openreview.net/forum?id=POWv6hDd9XH}
}

@inproceedings{
dettmers2024spqr,
title={Sp{QR}: A Sparse-Quantized Representation for Near-Lossless {LLM} Weight Compression},
author={Tim Dettmers and Ruslan A. Svirschevski and Vage Egiazarian and Denis Kuznedelev and Elias Frantar and Saleh Ashkboos and Alexander Borzunov and Torsten Hoefler and Dan Alistarh},
booktitle={The Twelfth International Conference on Learning Representations},
year={2024},
url={https://openreview.net/forum?id=Q1u25ahSuy}
}

@article{chee2023quip,
  title={Quip: 2-bit quantization of large language models with guarantees},
  author={Chee, Jerry and Cai, Yaohui and Kuleshov, Volodymyr and De Sa, Christopher M},
  journal={Advances in Neural Information Processing Systems},
  volume={36},
  pages={4396--4429},
  year={2023}
}

@article{zhang2024magr,
  title={Magr: Weight magnitude reduction for enhancing post-training quantization},
  author={Zhang, Aozhong and Wang, Naigang and Deng, Yanxia and Li, Xin and Yang, Zi and Yin, Penghang},
  journal={Advances in neural information processing systems},
  volume={37},
  pages={85109--85130},
  year={2024}
}

@inproceedings{nagel2020adaround,
  title={Up or down? adaptive rounding for post-training quantization},
  author={Nagel, Markus and Amjad, Rana Ali and Van Baalen, Mart and Louizos, Christos and Blankevoort, Tijmen},
  booktitle={International conference on machine learning},
  pages={7197--7206},
  year={2020},
  organization={PMLR}
}

@article{parikh2014proximal,
  title={Proximal algorithms},
  author={Parikh, Neal and Boyd, Stephen and others},
  journal={Foundations and trends{\textregistered} in Optimization},
  volume={1},
  number={3},
  pages={127--239},
  year={2014},
  publisher={Now Publishers, Inc.}
}

@article{touvron2023llama2,
  title={Llama 2: Open foundation and fine-tuned chat models},
  author={Touvron, Hugo and Martin, Louis and Stone, Kevin and Albert, Peter and Almahairi, Amjad and Babaei, Yasmine and Bashlykov, Nikolay and Batra, Soumya and Bhargava, Prajjwal and Bhosale, Shruti and others},
  journal={arXiv preprint arXiv:2307.09288},
  year={2023}
}

@article{grattafiori2024llama3,
  title={The llama 3 herd of models},
  author={Grattafiori, Aaron and Dubey, Abhimanyu and Jauhri, Abhinav and Pandey, Abhinav and Kadian, Abhishek and Al-Dahle, Ahmad and Letman, Aiesha and Mathur, Akhil and Schelten, Alan and Vaughan, Alex and others},
  journal={arXiv preprint arXiv:2407.21783},
  year={2024}
}

@misc{eval-harness,
  author       = {Gao, Leo and Tow, Jonathan and Abbasi, Baber and Biderman, Stella and Black, Sid and DiPofi, Anthony and Foster, Charles and Golding, Laurence and Hsu, Jeffrey and Le Noac'h, Alain and Li, Haonan and McDonell, Kyle and Muennighoff, Niklas and Ociepa, Chris and Phang, Jason and Reynolds, Laria and Schoelkopf, Hailey and Skowron, Aviya and Sutawika, Lintang and Tang, Eric and Thite, Anish and Wang, Ben and Wang, Kevin and Zou, Andy},
  title        = {The Language Model Evaluation Harness},
  month        = 07,
  year         = 2024,
  publisher    = {Zenodo},
  version      = {v0.4.3},
  doi          = {10.5281/zenodo.12608602},
  url          = {https://zenodo.org/records/12608602}
}

@inproceedings{
merity2016pointer-wiki,
title={Pointer Sentinel Mixture Models},
author={Stephen Merity and Caiming Xiong and James Bradbury and Richard Socher},
booktitle={International Conference on Learning Representations},
year={2017},
url={https://openreview.net/forum?id=Byj72udxe}
}

@inproceedings{
hendrycks2021measuring-mmlu,
title={Measuring Massive Multitask Language Understanding},
author={Dan Hendrycks and Collin Burns and Steven Basart and Andy Zou and Mantas Mazeika and Dawn Song and Jacob Steinhardt},
booktitle={International Conference on Learning Representations},
year={2021},
url={https://openreview.net/forum?id=d7KBjmI3GmQ}
}

@article{clark2018think-arc,
  title={Think you have solved question answering? try arc, the ai2 reasoning challenge},
  author={Clark, Peter and Cowhey, Isaac and Etzioni, Oren and Khot, Tushar and Sabharwal, Ashish and Schoenick, Carissa and Tafjord, Oyvind},
  journal={arXiv preprint arXiv:1803.05457},
  year={2018}
}

@article{zellers2019hellaswag,
  title={Hellaswag: Can a machine really finish your sentence?},
  author={Zellers, Rowan and Holtzman, Ari and Bisk, Yonatan and Farhadi, Ali and Choi, Yejin},
  journal={arXiv preprint arXiv:1905.07830},
  year={2019}
}

@article{kwiatkowski2019natural,
  title={Natural questions: a benchmark for question answering research},
  author={Kwiatkowski, Tom and Palomaki, Jennimaria and Redfield, Olivia and Collins, Michael and Parikh, Ankur and Alberti, Chris and Epstein, Danielle and Polosukhin, Illia and Devlin, Jacob and Lee, Kenton and others},
  journal={Transactions of the Association for Computational Linguistics},
  volume={7},
  pages={453--466},
  year={2019},
  publisher={MIT Press One Rogers Street, Cambridge, MA 02142-1209, USA journals-info~…}
}

@article{joshi2017triviaqa,
  title={Triviaqa: A large scale distantly supervised challenge dataset for reading comprehension},
  author={Joshi, Mandar and Choi, Eunsol and Weld, Daniel S and Zettlemoyer, Luke},
  journal={arXiv preprint arXiv:1705.03551},
  year={2017}
}

@article{frantar2022obq,
  title={Optimal brain compression: A framework for accurate post-training quantization and pruning},
  author={Frantar, Elias and Alistarh, Dan},
  journal={Advances in Neural Information Processing Systems},
  volume={35},
  pages={4475--4488},
  year={2022}
}
\bibliographystyle{icml2026}

\newpage
\appendix
\onecolumn

\section{Proofs}

\subsection{Proof of Theorem \ref{thm:flat_minima}}
\label{app:proof_flat_minima}
\begin{proof}
We analyze the loss landscape around the converged parameters $\bm{\Theta}_{\text{orig}}$ using a second-order Taylor expansion. Let $\hat{\calL}(\bm{\Theta})$ denote the quadratic approximation of the loss function. For a parameter perturbation $\Delta \bm{\Theta}$, the change in loss is given by:
\begin{equation}
    \Delta \hat{\calL} = \hat{\calL}(\bm{\Theta}_{\text{orig}} + \Delta \bm{\Theta}) - \calL(\bm{\Theta}_{\text{orig}}) = \nabla \calL(\bm{\Theta}_{\text{orig}})^\top \Delta \bm{\Theta} + \frac{1}{2} \Delta \bm{\Theta}^\top \mathbf{H} \Delta \bm{\Theta}.
\end{equation}
Since $\bm{\Theta}_{\text{orig}}$ corresponds to a local minimum (or a converged stationary point), the gradient vanishes ($\nabla \calL(\bm{\Theta}_{\text{orig}}) \approx \mathbf{0}$). The loss degradation is thus dominated by the quadratic form:
\begin{equation}
    \Delta \hat{\calL} = \frac{1}{2} \Delta \bm{\Theta}^\top \mathbf{H} \Delta \bm{\Theta}.
\end{equation}
Let $\mathbf{H} = \mathbf{Q} \mathbf{\Lambda} \mathbf{Q}^\top$ be the eigendecomposition of the Hessian, where $\mathbf{Q} = [\mathbf{v}_1, \dots, \mathbf{v}_D]$ are the orthonormal eigenvectors and $\mathbf{\Lambda} = \text{diag}(\lambda_1, \dots, \lambda_D)$ are the eigenvalues. 

We restrict our search to perturbations $\Delta \bm{\Theta}$ lying strictly within the \textit{Flat Subspace} $\mathcal{V}_{\text{flat}}$ (Definition \ref{def:flat_subspace}). Consequently, $\Delta \bm{\Theta}$ can be expressed as a linear combination of the eigenvectors associated with the tail spectrum indices $j \in \{r+1, \dots, D\}$:
\begin{equation}
    \Delta \bm{\Theta} = \sum_{j=r+1}^D c_j \mathbf{v}_j, \quad \text{where } c_j = \mathbf{v}_j^\top \Delta \bm{\Theta}.
\end{equation}
Substituting this into the quadratic form, and utilizing the orthonormality of eigenvectors:
\begin{equation}
    \Delta \hat{\calL} = \frac{1}{2} \left(\sum_{i=r+1}^D c_i \mathbf{v}_i\right)^\top \mathbf{H} \left(\sum_{j=r+1}^D c_j \mathbf{v}_j\right) = \frac{1}{2} \sum_{j=r+1}^D \lambda_j c_j^2.
\end{equation}
To derive a bound on the loss degradation, we apply the absolute value inequality and invoke Assumption~\ref{ass:hessian}, which states that $|\lambda_j| \le \gamma$ for all $j > r$:
\begin{equation}
    |\Delta \hat{\calL}| \le \frac{1}{2} \sum_{j=r+1}^D |\lambda_j| c_j^2 \le \frac{1}{2} \gamma \sum_{j=r+1}^D c_j^2.
\end{equation}
Recognizing that $\|\Delta \bm{\Theta}\|_2^2 = \sum_{j=r+1}^D c_j^2$ (Parseval's identity), we arrive at the upper bound:
\begin{equation}
    |\Delta \hat{\calL}| \le \frac{1}{2} \gamma \|\Delta \bm{\Theta}\|_2^2.
\end{equation}
To ensure the loss degradation remains within the tolerance $\epsilon$ (i.e., $|\Delta \hat{\calL}| \le \epsilon$), it is \textbf{sufficient} to enforce:
\begin{equation}
    \frac{1}{2} \gamma \|\Delta \bm{\Theta}\|_2^2 \le \epsilon \implies \|\Delta \bm{\Theta}\|_2 \le \sqrt{\frac{2\epsilon}{\gamma}}.
\end{equation}
Let $\delta = \sqrt{2\epsilon/\gamma}$. We define the region $\mathcal{M}_{\epsilon}$ as the intersection of the Flat Subspace and the Euclidean ball of radius $\delta$:
\begin{equation}
    \mathcal{M}_{\epsilon} = \{ \bm{\Theta} = \bm{\Theta}_{\text{orig}} + \Delta \bm{\Theta} \mid \Delta \bm{\Theta} \in \mathcal{V}_{\text{flat}}, \|\Delta \bm{\Theta}\|_2 \le \delta \}.
\end{equation}
Geometric Interpretation: While the exact isosurface of the quadratic loss is a degenerate ellipsoid stretched infinitely along flat directions, $\mathcal{M}_{\epsilon}$ represents a conservative spherical ``safe zone'' within this ellipsoid based on the worst-case curvature $\gamma$. For any $\bm{\Theta} \in \mathcal{M}_{\epsilon}$, the condition $|\calL(\bm{\Theta}) - \calL(\bm{\Theta}_{\text{orig}})| \le \epsilon$ is guaranteed under the quadratic model.
\end{proof}

\subsection{Proof of Theorem \ref{thm:bound} (Error Bound)}
\label{app:proof_bound}

\begin{proof}
Let $\Delta \bW = \bW - \mathcal{Q}(\bW)$ denote the quantization error matrix. We aim to bound the reconstruction error $\calE = \|\bX \Delta \bW\|_F$.

\textbf{1. Granularity of Quantization Noise.}
Consider a specific weight group vector $\mathbf{w} \in \mathbb{R}^g$ (corresponding to the $k$-th input group and $j$-th output channel). In symmetric uniform quantization with bit-width $b$, the scaling factor $s$ is determined by the maximum absolute value (outlier) in the group:
\begin{equation}
    s = \frac{\|\mathbf{w}\|_\infty}{2^{b-1}-1}.
\end{equation}
The quantized value is obtained by rounding to the nearest integer grid. Assuming no clipping truncation (since the scale is defined by the maximum value), the quantization error $\delta w_i$ for each element $w_i \in \mathbf{w}$ is bounded by half the discretization step size:
\begin{equation}
    |\delta w_i| \le \frac{s}{2} = \frac{\|\mathbf{w}\|_\infty}{2(2^{b-1}-1)}.
\end{equation}

\textbf{2. Bound on Weight Error Vector.}
We compute the Euclidean norm ($L_2$) of the quantization error vector $\Delta \mathbf{w} \in \mathbb{R}^g$ for this group:
\begin{equation}
    \|\Delta \mathbf{w}\|_2 = \sqrt{\sum_{i=1}^g (\delta w_i)^2} \le \sqrt{\sum_{i=1}^g \left( \frac{s}{2} \right)^2} = \frac{s\sqrt{g}}{2}.
\end{equation}
Substituting $s$, we obtain the bound linked to the weight outlier:
\begin{equation}
    \|\Delta \mathbf{w}\|_2 \le \frac{\sqrt{g}}{2(2^{b-1}-1)} \|\mathbf{w}\|_\infty.
\end{equation}

\textbf{3. Layer-wise Error Aggregation.}
The total error is the Frobenius norm of the matrix product $\bX \Delta \bW$. Let $\Delta \bW_{:,j}$ denote the $j$-th column of the noise matrix. The squared Frobenius norm decomposes column-wise:
\begin{equation}
    \calE^2 = \|\bX \Delta \bW\|_F^2 = \sum_{j=1}^{C_{out}} \|\bX \Delta \bW_{:,j}\|_2^2.
\end{equation}
For a single output channel $j$, the matrix-vector product can be decomposed into the sum of $K$ groups. Let $\bX^{(k)}$ be the sub-matrix of activations and $\Delta \mathbf{w}^{(k,j)}$ be the noise vector for the $k$-th group. Using the triangle inequality:
\begin{equation}
    \|\bX \Delta \bW_{:,j}\|_2 = \left\| \sum_{k=1}^K \bX^{(k)} \Delta \mathbf{w}^{(k,j)} \right\|_2 \le \sum_{k=1}^K \left\| \bX^{(k)} \Delta \mathbf{w}^{(k,j)} \right\|_2.
\end{equation}
Using the consistency property of matrix norms ($\|\mathbf{A}\mathbf{x}\|_2 \le \|\mathbf{A}\|_2 \|\mathbf{x}\|_2$) and the fact that the spectral norm is bounded by the Frobenius norm ($\|\mathbf{A}\|_2 \le \|\mathbf{A}\|_F$), we have:
\begin{equation}
    \left\| \bX^{(k)} \Delta \mathbf{w}^{(k,j)} \right\|_2 \le \|\bX^{(k)}\|_F \|\Delta \mathbf{w}^{(k,j)}\|_2.
\end{equation}
Combining this with the result from Step 2:
\begin{equation}
    \|\bX \Delta \bW_{:,j}\|_2 \le \sum_{k=1}^K \|\bX^{(k)}\|_F \cdot \left( \frac{\sqrt{g}}{2(2^{b-1}-1)} \|\bW^{(k,j)}\|_\infty \right).
\end{equation}

\textbf{4. Global Bound.}
Finally, we bound the total Frobenius norm $\calE$. Using the property that for a vector $\mathbf{v} \in \mathbb{R}^{C_{out}}$, $\|\mathbf{v}\|_2 \le \|\mathbf{v}\|_1$, we can sum the error bounds across output channels (conceptually treating the error bound of each channel as an element in a vector):
\begin{equation}
    \calE = \sqrt{\sum_{j=1}^{C_{out}} \|\bX \Delta \bW_{:,j}\|_2^2} \le \sum_{j=1}^{C_{out}} \|\bX \Delta \bW_{:,j}\|_2.
\end{equation}
Substituting the per-channel bound:
\begin{equation}
    \calE \le \frac{\sqrt{g}}{2(2^{b-1}-1)} \sum_{j=1}^{C_{out}} \sum_{k=1}^{K} \|\bX^{(k)}\|_F \|\bW^{(k,j)}\|_\infty.
\end{equation}
This completes the proof.
\end{proof}

\section{Additional Experimental Results on LLaMA-3}
\label{app:llama3_results}

In this section, we provide additional experimental results on the LLaMA-3.1-8B \cite{grattafiori2024llama3}. 
We evaluate the performance using the W3A16g128 quantization configuration.
Baselines include OmniQuant \cite{shao2024omniquant} and MagR \cite{zhang2024magr}.
Table \ref{tab:llama3_full} details accuracy on various benchmarks (MMLU, ARC, HellaSwag, etc.).

Consistent with the observations on LLaMA-2, Astro achieves the best competitive accuracy, further validating that the activation-guided structured regularization effectively handles the evolved outliers.

\begin{table*}[h]
\centering
\caption{Comparison of W3A16g128 LLaMA-3.1-8B PTQ performance on zero-shot/few-shot benchmarks. \textbf{Bold} indicates the best result.}
\label{tab:llama3_full}
\resizebox{\textwidth}{!}{
\begin{tabular}{lcccccccccc}
\toprule
\multirow{2}{*}{\textbf{Method}} & \multicolumn{5}{c}{\textbf{MMLU (5-shot)}} & \multicolumn{2}{c}{\textbf{Knowledge (5-shot)}} & \multicolumn{3}{c}{\textbf{Common Sense (0-shot)}} \\
\cmidrule(lr){2-6} \cmidrule(lr){7-8} \cmidrule(lr){9-11}
 & \textbf{Avg.} & \textit{STEM} & \textit{Human.} & \textit{Social} & \textit{Other} & \textbf{TriviaQA} & \textbf{NQ} & \textbf{ARC-C} & \textbf{ARC-E} & \textbf{HellaS.} \\
\midrule
FP16 & 65.3 & 56.0 & 59.8 & 76.4 & 72.2 & 70.4 & 28.5 & 55.2 & 82.4 & 79.3 \\
\midrule
OmniQuant & 51.7 & 45.5 & 45.9 & 59.3 & 59.1 & 47.6 & 14.8 & 45.1 & 71.6 & 72.3 \\
MagR & 51.9 & 46.1 & 46.1 & 59.7 & 58.9 & 45.5 & 15.8 & 44.7 & 73.6 & 72.0 \\
\textbf{Astro (Ours)} & \textbf{56.3} & \textbf{49.7} & \textbf{50.4} & \textbf{65.6} & \textbf{62.6} & \textbf{51.2} & \textbf{16.8} & \textbf{48.0} & \textbf{77.1} & \textbf{74.0} \\
\bottomrule
\end{tabular}
}
\end{table*}

\section{Implementation Algorithms}
\label{app:algorithms}

We provide the detailed pseudocode for the efficient $L_1$-ball projection used in the proximal step.

\begin{algorithm}[h]
\caption{Efficient Projection onto the $L_1$-Ball}
\label{alg:l1_proj}
\begin{algorithmic}[1]
\STATE \textbf{Input:} Vector $\mathbf{v} \in \mathbb{R}^g$, Radius $r=1$
\STATE \textbf{Output:} Projected vector $\mathbf{w} \in \mathbb{R}^g$ such that $\|\mathbf{w}\|_1 \le r$
\STATE
\STATE \textbf{Step 1: Check trivial case}
\IF{$\|\mathbf{v}\|_1 \le r$}
    \STATE \textbf{return} $\mathbf{v}$
\ENDIF
\STATE
\STATE \textbf{Step 2: Sorting-based Projection}
\STATE $\mathbf{u} \leftarrow \text{sort}(|\mathbf{v}|)$ in descending order
\STATE Compute cumulative sums: $S_j = \sum_{i=1}^j u_i$
\STATE Find the number of active components $\rho$:
\STATE $\rho = \max \{ j \in [1, g] : u_j - \frac{1}{j}(S_j - r) > 0 \}$
\STATE
\STATE \textbf{Step 3: Compute Threshold}
\STATE $\theta = \frac{1}{\rho} (S_\rho - r)$
\STATE
\STATE \textbf{Step 4: Soft-Thresholding}
\STATE $\mathbf{w} \leftarrow \text{sign}(\mathbf{v}) \odot \max(|\mathbf{v}| - \theta, 0)$
\STATE \textbf{return} $\mathbf{w}$
\end{algorithmic}
\end{algorithm}

\section{Detailed Analysis of Inference Overhead in Transformation-based PTQ}
\label{app:inference_overhead}

In previous sections, we identified a critical limitation of existing equivalent transformation methods, such as AWQ, OmniQuant, SpinQuant, and QuaRot. These approaches either incur additional inference latency or necessitate the development of complex, non-standard CUDA kernels. In this appendix, we provide a rigorous analysis of these constraints through the lenses of hardware efficiency and operator fusion difficulty.

\subsection{Memory Bandwidth Constraints in LLM Decoding}

Consider the standard linear layer computation within a quantized LLM. Let $\mathbf{X} \in \mathbb{R}^{N \times C_{in}}$ denote the input activation tensor and $\mathcal{Q}(\mathbf{W}) \in \mathbb{R}^{C_{in} \times C_{out}}$ represent the quantized weights. The standard inference operation is defined as:
\begin{equation}
    \mathbf{Y} = \mathbf{X} \cdot \mathcal{Q}(\mathbf{W}).
\end{equation}
Transformation-based methods introduce an invertible transformation matrix $\mathbf{T} \in \mathbb{R}^{C_{in} \times C_{in}}$ to smooth outliers. Consequently, the inference flow is modified as follows:
\begin{equation}
    \mathbf{Y} = (\mathbf{X}\mathbf{T}^{-1}) \cdot \mathcal{Q}(\mathbf{T}\mathbf{W}).
\end{equation}
While the weight transformation $\tilde{\mathbf{W}} = \mathcal{Q}(\mathbf{T}\mathbf{W})$ is performed offline, the activation transformation $\tilde{\mathbf{X}} = \mathbf{X}\mathbf{T}^{-1}$ need to be executed online during each inference step. 

The arithmetic complexity (FLOPs) of the transformation $\mathbf{X}\mathbf{T}^{-1}$ often appears negligible relative to the primary projection $\mathbf{X}\mathbf{W}$. However, in the context of LLM inference, particularly during the autoregressive decoding phase, performance is strictly memory-bound rather than compute-bound. If the transformation is implemented as a separate kernel prior to the main General Matrix Multiply (GEMM), it introduces a severe I/O penalty. Specifically, the GPU must perform the following sequence of operations:
\begin{enumerate}
    \item Load the raw activation $\mathbf{X}$ from High Bandwidth Memory (HBM) to on-chip SRAM;
    \item Execute the transformation $\tilde{\mathbf{X}} \leftarrow \mathbf{X}\mathbf{T}^{-1}$ within the functional units;
    \item Store the transformed $\tilde{\mathbf{X}}$ back to HBM;
    \item Reload $\tilde{\mathbf{X}}$ from HBM to SRAM for the subsequent GEMM operation.
\end{enumerate}
This "Read-Modify-Write-Read" pattern significantly increases the total memory traffic. For memory-bound decoding tasks, this overhead directly translates into increased latency, thereby negating some of the efficiency gains typically expected from low-bit quantization.

\subsection{Barriers to Offline Fusion}

To mitigate the I/O overhead associated with activation transformations, one might attempt to mathematically fuse the transformation matrix $\mathbf{T}^{-1}$ into preceding layers. However, such a strategy encounters fundamental structural barriers within modern Large Language Model (LLM) architectures.

\paragraph{1. Non-Commutativity with Activation Functions.}
Ideally, $\mathbf{T}^{-1}$ would be absorbed into the weight matrix of the preceding layer, $\mathbf{W}_{\text{prev}}$. However, modern LLMs typically interpose non-linear activation functions $\sigma(\cdot)$ between successive layers, such that the data flow is defined by:
\begin{equation}
    \mathbf{X}_{\text{curr}} = \sigma(\mathbf{X}_{\text{prev}} \mathbf{W}_{\text{prev}}).
\end{equation}
Eliminating the online computation of $\mathbf{T}^{-1}$ requires propagating the transformation backward through the non-linearity. This operation is only feasible if the commutativity property $\sigma(\mathbf{x}) \cdot \mathbf{T}^{-1} \equiv \sigma(\mathbf{x} \cdot \mathbf{T}^{-1})$ holds. While homogeneous functions such as ReLU satisfy this condition for diagonal scaling matrices with non-negative entries, modern architectures predominantly utilize SiLU or GELU. Because these functions are not scale-invariant, it follows that $\text{SiLU}(x) \cdot s \neq \text{SiLU}(x \cdot s)$. Consequently, the transformation cannot be fused into the previous layer including such activation functions, thereby necessitating explicit online execution during inference.

\paragraph{2. Structural Mismatch in Normalization Layers.}
Rotation-based methods, such as SpinQuant and QuaRot, encounter another challenge involving normalization layers. The operator preceding an Attention or MLP block is typically RMSNorm, defined as:
\begin{equation}
    \text{RMSNorm}(\mathbf{x}) = \frac{\mathbf{x}}{\|\mathbf{x}\|_2} \odot \mathbf{g},
\end{equation}
where $\mathbf{g}$ denotes a learnable diagonal scaling vector. Attempting to fuse a dense rotation matrix $\mathbf{R}^\top$ into this layer implies the computation of $\mathbf{g}_{\text{new}}(\mathbf{x}) = (\mathbf{x} \odot \mathbf{g}) \mathbf{R}^\top$, which fundamentally alters the operator's complexity class. Specifically, the standard RMSNorm is an element-wise operation with $\mathcal{O}(C_{in})$ parameters and complexity. In contrast, the fused rotation effectively transforms the operator into a dense linear layer with $\mathcal{O}(C_{in}^2)$ parameters. 

Such a structural modification precludes the use of highly optimized kernels
Although the orthogonality of $\mathbf{R}$ ensures the preservation of the Euclidean norm, allowing $\mathbf{R}^\top$ to theoretically bypass the normalization if $\mathbf{g}$ is absorbed elsewhere, this approach necessitates significant alterations to the canonical model architecture. Such intrusive modifications require re-performing complex operator fusion during compilation, which limits the flexibility and deployability of the resulting model.

\textbf{Conclusion.} In contrast to transformation-based techniques, Astro suppresses outliers by strictly reconstructing weights within the standard solution space. The resulting weights $\tilde{\mathbf{W}}$ are structurally identical to the original parameters, which guarantees \textbf{zero inference overhead}. Furthermore, this approach ensures total compatibility with existing compilation stacks and hardware-specific kernels, providing a decisive advantage for industrial-scale deployment.

\section{Generalizability Across Base Quantization Algorithms}
\label{app:ablition_rtn}

To demonstrate the architectural agnosticism and generalizability of the Astro framework, we evaluate its effectiveness when integrated with Round-to-Nearest (RTN) quantization. Unlike GPTQ, which utilizes Hessian-based information for layer-wise reconstruction, RTN represents the most fundamental quantization paradigm. By applying Astro as a pre-processing regularization step for RTN, we can isolate the impact of our activation-guided structured regularization from the influence of complex optimization-based reconstruction.

As summarized in Table \ref{tab:rtn_generalization}, the experimental results on LLaMA-2-7B (W3A16g128) confirm that Astro provides consistent performance improvements regardless of the underlying quantization algorithm. Specifically, compared to MagR, Astro achieves better perplexity reduction over the vanilla RTN baseline. 
These findings corroborate the benefits of our activation-guided outlier suppression. 

\begin{table}[htbp]
\centering
\vspace{-5pt}
\caption{WikiText-2 PPL ($\downarrow$) of LLaMA-2-7B (W3A16g128) using Round-to-Nearest (RTN) as the base quantization method. The results demonstrate that Astro effectively enhances even the most basic quantization paradigms.}
\label{tab:rtn_generalization}
\small
\begin{tabularx}{0.7\columnwidth}{l@{\extracolsep{\fill}}ccc}
\toprule
\textbf{Method} &  RTN (Vanilla) & MagR + RTN & Astro + RTN\\
\midrule
\textbf{WikiText-2 PPL $\downarrow$} & 6.66 & 6.48 & \textbf{6.40} \\
\bottomrule
\end{tabularx}
\end{table}


\end{document}